\documentclass[10pt]{article} 
\usepackage[preprint]{tmlr}


\usepackage{amsmath,amsfonts,bm}









\def\eqref#1{equation~\ref{#1}}









\def\1{\bm{1}}










\DeclareMathAlphabet{\mathsfit}{\encodingdefault}{\sfdefault}{m}{sl}
\SetMathAlphabet{\mathsfit}{bold}{\encodingdefault}{\sfdefault}{bx}{n}













\usepackage{mathtools}
\usepackage{graphicx}
\usepackage{eurosym}
\usepackage{tabularray}
\usepackage{algorithm}
\usepackage{dsfont}
\usepackage{multicol}
\usepackage{multirow} 
\let\AND\relax
\usepackage{algorithmic}
\usepackage{microtype}
\usepackage{subfigure}
\usepackage{subcaption}
\usepackage{booktabs} 
\usepackage{hyperref}
\usepackage{amsmath}
\usepackage{amssymb}
\usepackage{mathtools}
\usepackage{amsthm}
\usepackage[capitalize,noabbrev]{cleveref}
\usepackage{caption}
\usepackage{comment}
\theoremstyle{plain}
\newtheorem{theorem}{Theorem}[section]

\newtheorem{lemma}[theorem]{Lemma}
\newtheorem{corollary}[theorem]{Corollary}
\theoremstyle{definition}
\newtheorem{definition}[theorem]{Definition}

\theoremstyle{remark}

\usepackage{wrapfig}
\usepackage{floatflt}  
\usepackage[normalem]{ulem}

\title{Provably Efficient Reward Transfer in Reinforcement Learning with Discrete Markov Decision Processes}


\author{\name Kevin Vora \email kvora1@asu.edu \\
      \addr School of Computing and Augmented Intelligence \\
      Arizona State University
      \AND \\
      \name Yu Zhang \email yu.Zhang.442@asu.edu \\
      \addr School of computing and Augmented Intelligence \\
      Arizona State University}



\begin{document}

\maketitle

\begin{abstract}

In this paper, we propose a new solution to reward adaptation (RA) in reinforcement learning, where the agent adapts to a target reward function based on one or more existing source behaviors learned a priori under the same domain dynamics but different reward functions. While learning the target behavior from scratch is possible, it is often inefficient given the available source behaviors. Our work introduces a new approach to RA through the manipulation of Q-functions. Assuming the target reward function is a known function of the source reward functions, we compute bounds on the Q-function and present an iterative process (akin to value iteration) to tighten these bounds. 
Such bounds enable action pruning in the target domain before learning even starts.
We refer to this method as ``\textit{Q-Manipulation}'' (Q-M). 
\textcolor{black}{The iteration process assumes access to a lite-model, which is easy to provide or learn. }
We formally prove that Q-M, under discrete domains, does not affect the optimality of the returned policy and show that it is provably efficient in terms of sample complexity in a probabilistic sense. Q-M is evaluated in a variety of synthetic and simulation domains to demonstrate its effectiveness, generalizability, and practicality. 
\end{abstract}

\section{Introduction}
Reinforcement Learning (RL) as described by ~\cite{watkins1989learning, sutton2018reinforcement} represents a class of learning methods that allow agents to learn from interacting with the environment. RL has demonstrated great successes in various domains such as games like Chess in \cite{campbell2002deep}, Go in \cite{silver2016mastering}, and Atari games in \cite{mnih2015human}, logistics in ~\cite{yan2022reinforcement}, biology in ~\cite{angermueller2019model}, and robotics in ~\cite{kober2013reinforcement}. 
{\color{black}
However, applying RL to many real-world problems still
suffers from the issue of high sample complexity. 
Prior approaches have been proposed to alleviate the issue from different perspectives, 
such as learning optimization, transfer learning, 
modular and hierarchical RL, and offline RL. However, few methods provably improve sample complexity.

}
\begin{floatingfigure}[r]{0.39\textwidth}
  \centering
  \vskip -15pt
  \includegraphics[width=0.38\textwidth]{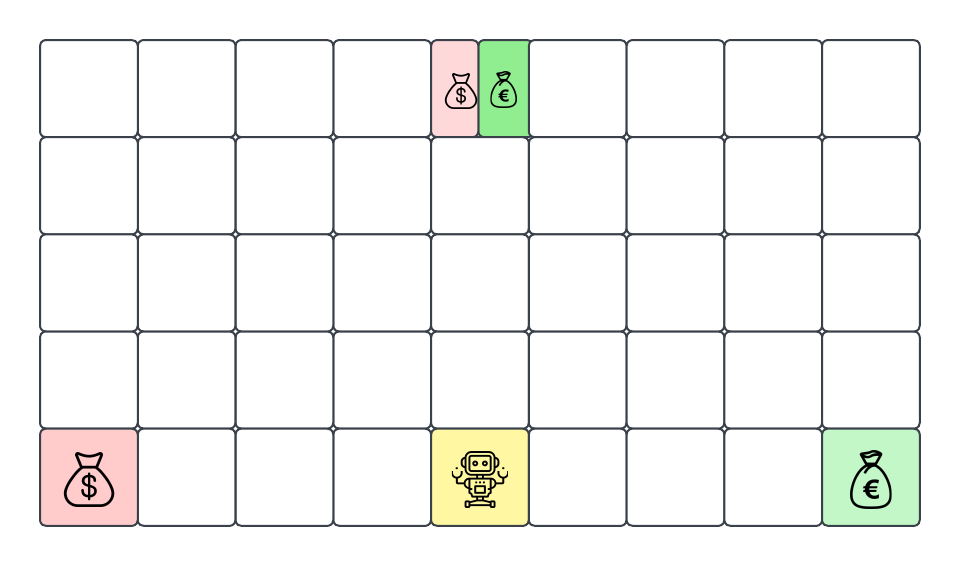}
  \vskip -5pt
  \caption{Dollar-Euro domain.}
  \label{env}
\end{floatingfigure}

The problem of reward adaptation (RA) was first introduced and addressed by~\cite{barreto2018successor, barreto2020fast},
where the learning agent adapts to a target reward function given one or multiple existing behaviors learned a priori (referred to as the source behaviors) {\color{black} under the same actions and transition dynamics} but different reward functions. 
 \textcolor{black}{ RA has many useful applications, such as enabling a vehicle's driving behavior from two known behaviors (comfortable driving with passengers and fast driving for goods delivery) to a new target behavior that combines comfort and speed, accommodating both passengers and goods.}
Featuring such a special type of transfer learning, RA methods can benefit from an ever-growing repertoire of source behaviors to create new and potentially more complex target behaviors.
Learning the target behavior from scratch via model-free methods is possible but often inefficient
given the available source behaviors,
while model-based methods are prone to model error, e.g., due to uneven exploration of the state space when learning the source behaviors.
In this paper, 
{\color{black}
 we present a new approach that takes the advantage of both sides while avoiding their limitations to bridge the gap for RA. 
 This lite-model offers
complimentary benefits 
with respect} to the previous work.  

To better conceptualize the RA problem, consider a grid-world as shown in Fig. \ref{env}, which is an expansion of the Dollar-Euro domain described by \cite{russell2003q}. In this domain, the agent can 
move to any of its adjacent locations at any step. 
The agent’s initial location is colored in yellow, and the terminal locations are colored pink or green,
which correspond to the source reward functions (i.e., collecting dollars and euros), respectively. 
Visiting the terminal location with a single color returns a reward of $1.0$ under the corresponding reward function, 
and visiting the terminal location with split colors returns a reward of $0.6$ under both reward functions. 
A target domain 
may correspond to a reward function that awards both dollars and euros. 


\textcolor{black}{RA is most efficient when the source and target domains feature reward functions that are correlated. For example, an assumption was made in Successor Feature Q-Learning (SFQL)~\cite{barreto2018successor, barreto2020fast} where}
the reward functions are expressed as 
feature weights. 
\textcolor{black}{An advantage of this assumption is} that the source behaviors can be evaluated easily under the target domain. 
SFQL can be viewed as combining the best parts of the source behaviors to initialize learning. 
Consequently, SFQL may not work well for situations where the target behavior differs substantially from the source behaviors, such as in the Dollar-Euro domain. 
Our proposed approach, on the other hand, represents a more general knowledge transfer method
whose efficacy does not rely on the similarity between the source and target behaviors. 
Soft Q Bounding (SQB)~\cite{adamczyk2024boosting} 
establishes double sided bounds of the Q function to speed up learning by clipping overly optimistic or pessimistic updates.   
\textcolor{black}{When viewed as a method for RA, it relies on a given Q function computed from source behaviors. Even though the idea of computing the bounds bears some similarity to our work, SQB does not exploit the source behaviors during target learning, resulting in unpredictable performance} \textcolor{black}{ due to reliance on samples.}


Our approach to RA is referred to as ``{\textit{Q-Manipulation}}'' (Q-M). 
\textcolor{black}{In this paper, we focus on discrete Markov Decision Processes (MDPs) for a theoretical treatment. 
Challenges in addressing continuous MDPs are discussed in Section \ref{Discussion}, with directions outlined for future work.
}
\textcolor{black}{Similar to prior work on RA, we assume the relationship between the source reward functions and the target reward function is known, and in our case, via what is referred to as a {\it combination function}.}
\textcolor{black}{The intuition here is that }
we often have a good idea about the functional relationship \textcolor{black}{(potentially noisy)} between the source and target reward functions, e.g., linear in the Dollar-Euro domain. 
Based on such a relationship, Q-M computes an upper and lower bound of the Q-function in the target domain via an iterative process similar to value iteration. 
\textcolor{black}{This process operates on a lite-model of the transition function that is easier to provide or estimate while learning the source behaviors. The output bounds}
enable us to prune actions before learning the target behavior, without affecting its optimality.
 In our evaluation, we empirically show that the effectiveness of Q-M across simulated and randomly generated domains, and also analyze its limitations. 
In general, Q-M
operates under additional \textcolor{black}{computation and space} 
requirements 
\textcolor{black}{that are reasonable} to implement in practice, with its benefits \textcolor{black}{to sample complexity} outweighing these costs. 
For a comprehensive comparison between Q-M and competing \textcolor{black}{transfer learning} approaches, refer to table \ref{comparison}.

Our core contributions are:
{\color{black}
 We address the problem of reward adaptation (RA) via Q-Manipulation (Q-M) in domains with discrete state and action 
 spaces, and demonstrate that Q-M is provably efficient to sample complexity \textcolor{black}{due to action pruning}, and represents a new approach to RA that supports more general knowledge transfer than the previous work. 
    \textcolor{black}{
    We introduce two methods \textcolor{black}{that both leverage a lite-model for capturing neighboring-state information}:
    1) Q-Manipulation (Q-M), which modifies Bellman updates to compute upper and lower bounds on the target domain's $Q^*$ 
    \textcolor{black}{in a value-iteration 
    style update} 
    and 2) Monotonic Q-Manipulation (M-Q-M), which extends Q-M by incorporating source Q-functions \textcolor{black}{for initializing the bounds and then iteratively tighten them.} 
    We formally prove the correctness of \textcolor{black}{both methods}. 
     Since optimal actions \textcolor{black}{are preserved} 
     , there is no negative transfer.
    }
    We extensively evaluate Q-M against baselines under its theoretical assumptions to validate its efficacy and analyze its limitations, \textcolor{black}{such as with nonlinear and imperfect combination functions.}
    \textcolor{black}{Results confirm Q-M as a valuable approach for RA. }
 }   

\section{Related work}
\textcolor{black}{Transfer learning and multi-task learning have emerged as two central paradigms in RL for improving sample efficiency and generalization. This section surveys existing approaches across these domains, with a focus on how prior experience from one or more source tasks can be exploited to accelerate or stabilize learning the target task.}

\textbf{Transfer Learning in Reinforcement Learning:}
The goal of transfer learning in RL is to utilize knowledge gained from previously solved tasks (source tasks) to improve performance in a different, typically unseen task (target task). Foundational surveys such as \cite{taylor2009transfer, wulfmeier2023foundations} categorize transfer techniques by how the knowledge is transferred, ranging from policies and value functions to learned models and internal representations. 
 According to the taxonomy of transfer reinforcement learning proposed by \cite{taylor2009transfer}, 
 \textcolor{black}{reward adaptation belongs to transfer learning where the allowed task difference is $R$. }
 \textcolor{black}{Even though more general transfer methods can be applied to reward adaptation, such as~\cite{mann2013directed}, these methods are often heuristic in nature due to the  generic relationship between the source and target domains. }
 Within the category of reward adaptation in transfer RL, several relevant methods have been proposed. Notably, SFQL and SQB represent recent efforts aimed at improving transfer efficiency through the use of successor features ($\phi$ and $w$) and action-value bounds, respectively. 
\textcolor{black}{
Table \ref{comparison} presents a comprehensive comparison of three methods, detailing their key characteristics, including assumptions, core strategies, and intended use cases to provide a clearer understanding of the purpose and design of each approach.
}
\begin{table}[!htbp]
\centering
\footnotesize
\begin{tabular}{|p{2.5cm}|p{2.1cm}|p{2.9cm}|p{1.8cm}|p{3.1cm}|}
\hline
\multicolumn{2}{|c|}{} & \textbf{ Q-M (M-Q-M)}   & \textbf{SFQL \cite{barreto2018successor}} & \textbf{SBQ \cite{adamczyk2024boosting}} \\
\hline
\multirow[t]{2}{*}{\textbf{Assumptions}} & Source Domain & \textcolor{black}{$R_i$, $\hat{T}(\cdot\mid s,a)$}, Q-variants (M-Q-M only) & $\phi$ and $w_i$  & Initial Q \\
\cline{2-5}
 & Target Domain & $f$ 
 & $w_{target}$ & NA \\
\hline
\multicolumn{2}{|l|}{\textbf{Strategy for transfer}} & Action pruning & Warm start & Clipped Bellman update \\
\hline
\multicolumn{2}{|l|}{\textbf{Guaranteed improvement on sample efficiency}} & \textbf{Yes} & No & No \\
\hline
\multicolumn{2}{|l|}{\textbf{Pre-Learning or Online Learning}} & Pre-Learning & Pre-Learning & Online Learning \\
\hline
\multicolumn{2}{|l|}{\textbf{Modularity}} & Yes & Yes & Yes \\
\hline
\multicolumn{2}{|l|}{\textbf{Robust against Negative Transfer}} & \textbf{Yes} & No & No \\
\hline
\end{tabular}
\caption{Comparative analysis of Q-M, SFQL and SQB for transfer learning given source domain Q-values, target reward as combination of source reward $\mathcal{R}=f(R_i)$ and lite-model
$\hat{T}(\cdot \mid s,a)$}
\label{comparison}
\end{table}

\textbf{Multi-Task Reinforcement Learning (MTRL):}
MTRL addresses transfer from a different angle by aiming to jointly learn across multiple tasks. Rather than assuming task isolation, MTRL leverages shared experience across a distribution of tasks to improve generalization or reduce training time on each task \cite{vithayathil2020survey}. Common methods involve parameter sharing at the level of policy networks, value functions, or representation encoders. For example, \cite{d2019sharing} explores decentralized training where multiple agents share parameters with a central model during training, facilitating cross-task generalization.
MTRL typically involves joint learning across tasks.
Moreover, MTRL approaches typically rely on strong assumptions about the alignment of tasks during training, such as tasks belonging to the same distribution~\cite{taylor2009transfer}.
These assumptions are necessary for stable joint optimization but significantly limit robustness when attempting to generalize or transfer across structurally diverse or highly heterogeneous tasks.

Despite their promise, \textcolor{black}{existing} transfer RL and MTRL methods often assume a high degree of behavioral or reward similarity between source and target tasks. When this assumption does not hold, transferred knowledge can mislead the learning process, a phenomenon known as \textit{negative transfer}. To mitigate this, some approaches investigate mechanisms for estimating or bounding task similarity prior to transfer, though such measures are not always practical, reliable, or easy to compute \cite{carroll2005task, taylor2009transfer}.
\textcolor{black}{The proposed approach to reward adaptation does not rely on this assumption and thus bridges an important gap in transfer learning, offering a complementary and robust strategy for leveraging prior task knowledge across tasks, regardless of their similarity.}


\textcolor{black}{
\textbf{Other Related Paradigms:}
Several adjacent fields intersect with the goals of transfer in RL and RA. Reward decomposition \cite{russell2003q, van2017hybrid} breaks down complex reward signals into composable sub-rewards to simplify learning. Multi-objective RL \cite{roijers2013survey, vamplew2011empirical} optimizes policies that balance trade-offs among multiple predefined objectives. Hierarchical RL (HRL) \cite{dietterich1998maxq, bacon2017option} structures learning around temporal abstraction, using high-level policies to control low-level sub-policies. While conceptually related, these approaches often rely on known task structure, explicit sub-goals, or carefully designed reward factorizations, which limit their applicability to general transfer scenarios or in some cases they are only scaling up RL instead of effective transfer of knowledge.
}

\section{Proposed approach}
In this section, we start with a brief introduction to reinforcement learning (RL) before discussing reward adaptation (RA) and formulating our approach. 

\subsection{Preliminaries}
In RL, the task environment is modeled as an MDP
  $M = (S,A,T,R,\gamma)$, where $S$ is the state space, $A$ is the action space, $T: S \times A \times S \rightarrow [0,1]$ is the transition function, $R: S \times A \times S \rightarrow \mathbb{R} $ is the reward function, and $\gamma$ is the discount factor. At every step $t$, the RL agent observes state $s_{t}$ and takes an action $a_t \in A$. As a result, the agent progresses to state
$s_{t+1}$ according to the transition dynamics $T(\cdot | s_{t}, a_{t})$, 
and receives a reward $r_{t}$.  The goal is to search for a policy that maximizes the expected cumulative reward. 
We use $\pi: S \to A$ to denote a policy.
The $Q$ function of the optimal policy $\pi^*$ is denoted by $Q^*$ and defined in Eq. \ref{1}.

\begin{wrapfigure}[5]{r}{0.4\textwidth}
  \vspace{-28pt} 
  \begin{minipage}{\linewidth}
    \begin{equation}\label{1}
        Q^*(s,a) = \max_{\pi} \left[ \mathbb{E} \left[ \sum_{t=0}^{\infty} \gamma^t r_t | s_0, \pi \right] \right]
    \end{equation}
    \begin{equation}\label{2}
         Q^{\mu}(s,a) = \min_{\pi} \left[ \mathbb{E} \left[ \sum_{t=0}^{\infty} \gamma^t r_t | s_0, \pi \right] \right]
    \end{equation}
  \end{minipage}
\end{wrapfigure}
To prepare us for later discussion, we also introduce $Q^{\mu}$ (Eq. \ref{2}) to represent the $Q$ function of the ``worst'' policy that minimizes the expected return.  
The following lemma establishes the connection between $Q^{\mu}$ and a variant of $Q^*$: 
\begin{lemma}\label{lemma1}
$Q^{\mu}_{R}(s,a) = -Q^{*}_{-R}(s,a)$, where $Q^{*}_{-R}(s,a)$ denotes the Q function of the optimal policy under negative $R$ or $-R$.
\end{lemma}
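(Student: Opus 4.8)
The plan is to establish the identity by unfolding the definitions of $Q^{\mu}_R$ and $Q^{*}_{-R}$ and showing that the two optimization problems are mirror images of one another. The key observation is that negating the reward function turns a minimization over policies into a maximization, and vice versa, because for any real-valued function $g$ over the policy space we have $\min_{\pi} g(\pi) = -\max_{\pi}\bigl(-g(\pi)\bigr)$. I would apply this to the expected discounted return.

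First I would write out, from Eq.~\ref{2}, the definition
\[
Q^{\mu}_R(s,a) = \min_{\pi} \mathbb{E}\left[\sum_{t=0}^{\infty} \gamma^t r_t \,\middle|\, s_0=s,\, a_0=a,\, \pi\right],
\]
where $r_t = R(s_t,a_t,s_{t+1})$ is the reward accrued under reward function $R$. Next I would pull the negation inside the expectation: since expectation and the discounted sum are linear, the return under reward function $-R$ along any trajectory is exactly the negation of the return under $R$, i.e. $\mathbb{E}[\sum_t \gamma^t(-r_t)\mid s_0,a_0,\pi] = -\,\mathbb{E}[\sum_t \gamma^t r_t \mid s_0,a_0,\pi]$. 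Crucially, this holds for a \emph{fixed} policy $\pi$, and because the transition dynamics $T$ are unchanged by altering the reward, the set of policies and the induced trajectory distributions are identical in the two MDPs. I would then invoke the elementary min--max duality to convert
\[
\min_{\pi}\, \mathbb{E}\!\left[\sum_t \gamma^t r_t\right]
= -\max_{\pi}\left(-\,\mathbb{E}\!\left[\sum_t \gamma^t r_t\right]\right)
= -\max_{\pi}\, \mathbb{E}\!\left[\sum_t \gamma^t (-r_t)\right],
\]
and recognize the right-hand maximization as precisely $Q^{*}_{-R}(s,a)$ by Eq.~\ref{1} applied to the MDP with reward $-R$. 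Chaining these equalities yields $Q^{\mu}_R(s,a) = -Q^{*}_{-R}(s,a)$.

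I do not expect a genuinely hard obstacle here, since the result is essentially a restatement of linearity of expectation combined with the duality $\min = -\max(-\cdot)$. The one point requiring care is making sure the policy class over which we optimize is the same in both problems: the argument relies on the fact that changing $R$ to $-R$ leaves $S$, $A$, $T$, and $\gamma$ intact, so that every policy available in the $R$-MDP is available in the $(-R)$-MDP and induces the same distribution over state--action trajectories. Once this is noted, the per-policy return simply flips sign and the optimizer swaps from the return-minimizing to the return-maximizing policy. I would therefore present the proof as a short chain of equalities, with a one-sentence remark confirming that the shared dynamics justify optimizing over a common policy set.
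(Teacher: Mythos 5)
Your proposal is correct and follows exactly the paper's own argument: unfold the definition of $Q^{\mu}_R$ from Eq.~\ref{2}, use linearity of expectation together with the identity $\min_{\pi} g(\pi) = -\max_{\pi}\bigl(-g(\pi)\bigr)$ to push the sign change into the reward, and recognize the resulting maximization as $Q^{*}_{-R}$ via Eq.~\ref{1}. Your added remark that the policy class and trajectory distributions are unchanged when $R$ is replaced by $-R$ is a sensible point of care that the paper leaves implicit.
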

In this paper, we consider RL with discrete state and action spaces and deterministic policies. Extending the discussion to the continuous cases and stochastic policies will be future work. Proofs throughout the paper are in the appendix. 

\subsubsection{Reward Adaptation (RA)}
In RA, we \textcolor{black}{consider only the adaptation of the reward function, and hence} assume the same transition dynamics, state, and action spaces for the source and target behaviors. 
\textcolor{black}{Furthermore, the source and target reward functions are assumed to be related, e.g., via success features in~\cite{barreto2018successor}.}  
The source domains are no longer accessible while learning target behaviors but may \textit{transfer
knowledge} to the target domain.
Next, we provide the problem statement of RA under our approach as follows:

\begin{definition}{Reward Adaptation}: 
\textcolor{black}{RA is a problem to }
determine the optimal policy under a target reward function $\mathcal{R}$ that is \textcolor{black}{under a \textbf{known} functional relationship with }
the source reward functions specified as follows: 
\begin{equation}
\mathcal{{R}}= f({R}_1, {R}_2, \ldots {R}_n) 
\label{combination}
\end{equation}
where $f$ is  referred to as the combination function that relates rewards from the source and target domains.
\end{definition}

\textcolor{black}{The above definition provides a general formulation of RA when no restriction is placed on $f$, equivalent to that in~\cite{barreto2018successor} when each state is viewed as a unique feature. In practice, however, we often restrict the form of $f$, such as assuming linearity.
In such cases, even though the above formulation becomes less expressive than ~\cite{barreto2018successor} for RA problems, it enables Q-M to drop the requirement on the similarity between the source and target behaviors.
When $f$ must be learned, $f$ can also be augmented with a noise component to accommodate for imperfect observations or function mappings (more discussion later).}


\subsubsection{Provably Efficient   Transfer in Q-Learning} \label{SC}
A central objective in transfer learning is to establish theoretical guarantees that knowledge transfer leads to improved sample efficiency \citep{agarwal2023provable, tirinzoni2020sequential, mann2013directed}. 
\textcolor{black}{However, these results do not apply readily to reward adaptation, which has a more restrictive problem setting that allows the analysis to be more targeted. In particular, in our approach, the efficiency 
\textcolor{black}{of}  sample complexity lies in action pruning.}
Using Theorem~7 from \cite{qu2020finite}, the sample complexity bound is given by  
$
\mathcal{O}\!\left(\frac{(|\mathcal{S}||\mathcal{A}|)^2 \, t_{\mathrm{mix}}}{(1-\gamma)^5 \, \varepsilon^2}\right),
$
where $|\mathcal{S}|$ denotes the cardinality of the state space, $|\mathcal{A}|$ denotes the cardinality of the action space, $t_{\mathrm{mix}}$ is the mixing time of the underlying Markov chain induced by the policy, $\gamma \in (0,1)$ is the discount factor, and $\varepsilon > 0$ is the accuracy parameter.  
Q-M pruning strategy eliminates suboptimal actions while ensuring that the optimal action is preserved, thereby yielding a reduced action set $\tilde{A} \leq \mathcal{A}$. 
\textcolor{black}{
Furthermore, pruning may render some states of the original MDP unreachable, since transitions associated with pruned actions are removed. 
This is equivalent to cutting down the state space since the theory applies to space that is ergodic. 
}
Hence, action pruning
directly impacts the $(|\mathcal{S}||\mathcal{A}|)^2$ term in the bound. Moreover, the simplified transition structure typically leads to a smaller mixing time $t_{\mathrm{mix}}$.  
Together, these effects yield a \emph{polynomial}  (at least a quadratic)
reduction in sample complexity, as compared to the original MDP.
\textcolor{black}{It is also worth noting that action pruning not only benefits sample complexity but also regret. 
Even though these are separate concepts in RL, 
they are often related: the theoretical regret bounds often depend polynomially on the cardinality of the state and action space  (\cite{zhang2020almost, bai2019provably}).}
\subsection{Q-Manipulation} \label{QM-iteratio}
\vskip-3pt

\textcolor{black}{In transfer learning, the source domains can pass information to the target domain to facilitate its learning. In RA such as ~\cite{barreto2018successor}, for example, what is passed includes 1) successor features, which are essentially discounted feature counts, and 2) source weights. The successor features of a source policy allow it to be evaluated easily under any new task given its weights.  In Q-M, we assume the following to be passed: a) source reward function, and b) a lite-model of the environment (denoted by $\hat{T}(\cdot |s,a)$), where the lite-model captures neighboring information only. For example, the neighbors of a state after executing an action may be represented as a small region around that state. Since the lite-model does not model the distribution of such neighbors, it is much easier to specify or learn (while learning the source behaviors) than the full dynamics model in MDP. 
For the target domain, feature weights are assumed in ~\cite{barreto2018successor} and $f$ is assumed in Q-M. 
Both may also be learned. 
From this perspective, Q-M is comparable to ~\cite{barreto2018successor} in terms of resource demands.}

In Q-M, we 
iteratively refine an upper and lower bound (UB and LB) of $Q_{\mathcal{R}}^*$ . 
These 
steps are formalized below:

\color{black}
\begin{definition}[Q-M Bellman Operators]
The Bellman operators for UB and LB in Q-M are mappings $\mathcal{T}: \mathbb{R}^{|S\times A|} \rightarrow \mathbb{R}^{|S\times A|}$ that satisfy, respectively:
\begin{align}
    (\mathcal{T}_{max} Q^{UB}_k)(s, a) &= \max_{s' \in {\hat{T}(\cdot |s,a)}}  \left[ \mathcal{R}(s, a, s') + \gamma \max_{a'} Q^{UB}_k(s', a') \right]
    \label{ub_new}
    \\
    (\mathcal{T}_{min} Q^{LB}_k)(s, a) &= \min_{s' \in {\hat{T}(\cdot |s,a)}}  \left[ \mathcal{R} (s, a, s') + \gamma \max_{a'} Q^{LB}_k(s', a') \right]
    \label{lb_new}
\end{align}
\end{definition}
\color{black}

\textcolor{black}{More specifically}, $\hat{T}(\cdot |s,a)$ denotes {\textbf{1-step reachable states}} from $s,a$.
\textcolor{black}{In practice, $\hat{T}(\cdot |s,a)$ can be estimated via memorization while learning the source behaviors. Such information can then be consolidated by the target domain. An assumption here is that each transition must have been experienced at least once by at least one of the source domains during learning, which is a much less stringent requirement than that in the value convergence of RL.
When information regarding $\hat{T}(\cdot |s,a)$ is available, it can be provided directly to the target domain without 
implicit knowledge transfer.
A similar practice can be adopted for the source reward functions, which are combined via $f$ to compute the target reward function used above (i.e., $\mathcal{R}$).}
\textcolor{black}{With deterministic domains, the Q-M Bellman operators above are exactly the operator in value iteration. }

\textcolor{black}{Note about lite-models: \cite{zhuo2017model} has explored the use of lite-models for planning.
Lite models are incomplete or approximate models of a domain: they provide limited or coarse-grained information about the domain dynamics.
While the lite-model considered in our work cannot be used directly for planning, it can support reinforcement learning, opening up a new possibility for transfer learning. }

Similar to value iteration, 
the UB and LB can be initialized arbitrarily, established by the following theorem. 


\color{black}
\begin{theorem}[Q-M Convergence]\label{qms} 
$\mathcal{T}:\mathbb{R}^{|S\times A|} \rightarrow \mathbb{R}^{|S\times A|}$ is a strict contraction
such that the $Q$ function  converges to a unique fixed point for UB and LB, respectively, or more formally:
\[
\begin{aligned}
\|\mathcal{T'} Q_k - \mathcal{T'} Q_{k+1}\|_{\infty} 
&\leq \gamma \|Q_k - Q_{k+1}\|_{\infty}, 
\forall Q_k, Q_{k+1} &\in \mathbb{R}^{|S \times A|}
\end{aligned}
\]
\end{theorem}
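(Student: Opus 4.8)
The plan is to establish the $\gamma$-contraction directly from the definitions and then invoke the Banach fixed-point theorem, exactly as in the standard convergence proof for value iteration. I would treat $\mathcal{T}_{max}$ and $\mathcal{T}_{min}$ simultaneously by writing $\mathcal{T}'$ for either operator, since the only difference between them is an outer $\max_{s'}$ versus $\min_{s'}$ over the set of $1$-step reachable states $\hat{T}(\cdot\mid s,a)$, and both of these aggregation operators are non-expansive in the $\ell_\infty$ norm.

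First I would record two elementary facts about extremization over a common finite index set: for any real-valued functions $g,h$ on a finite set $X$,
\[
\Bigl|\max_{x\in X} g(x) - \max_{x\in X} h(x)\Bigr| \le \max_{x\in X}\bigl|g(x)-h(x)\bigr|,
\]
and the identical bound holds with $\min$ in place of $\max$. Fixing a pair $(s,a)$ and two functions $Q_k, Q_{k+1}$, I would apply this first to the inner $\max_{a'}$, giving $\bigl|\max_{a'}Q_k(s',a') - \max_{a'}Q_{k+1}(s',a')\bigr| \le \|Q_k-Q_{k+1}\|_\infty$ for every $s'$, and then to the outer $\max_{s'}$ or $\min_{s'}$. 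The crucial observation is that the reward term $\mathcal{R}(s,a,s')$ does not depend on the $Q$ argument, so it cancels inside the difference, leaving $\gamma$ times a difference of inner maxima. Combining these steps gives, for every $(s,a)$,
\[
\bigl|(\mathcal{T}' Q_k)(s,a) - (\mathcal{T}' Q_{k+1})(s,a)\bigr| \le \gamma\,\|Q_k - Q_{k+1}\|_\infty ,
\]
and taking the supremum over $(s,a)$ yields the claimed bound.

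With $\gamma\in(0,1)$ this is a strict contraction on the complete metric space $\bigl(\mathbb{R}^{|S\times A|}, \|\cdot\|_\infty\bigr)$, so the Banach fixed-point theorem delivers a unique fixed point for each of $\mathcal{T}_{max}$ and $\mathcal{T}_{min}$, with iteration from an arbitrary initialization converging geometrically to it. I expect no serious obstacle here, since the argument is structurally identical to the classical value-iteration contraction proof. The only points requiring a little care are: confirming that the index set $\hat{T}(\cdot\mid s,a)$ over which the outer $\max/\min$ ranges is the same for $Q_k$ and $Q_{k+1}$ (it is, because the lite-model's reachable-state set is independent of the current value estimate), so that the non-expansiveness lemma applies over a common domain; and verifying that the $\min$-based operator $\mathcal{T}_{min}$ inherits the same non-expansiveness as $\mathcal{T}_{max}$, which follows from the $\min$ version of the inequality above.
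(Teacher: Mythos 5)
Your proposal is correct and follows essentially the same route as the paper's proof: both rest on the non-expansiveness of $\max$ and $\min$ over a common finite index set (the paper states these as separate lemmas), cancel the reward term inside the difference, peel off the factor $\gamma$, take the supremum over $(s,a)$, and conclude via the Banach fixed-point theorem. The only cosmetic difference is that you handle $\mathcal{T}_{max}$ and $\mathcal{T}_{min}$ in one unified argument, whereas the paper writes out $\mathcal{T}_{min}$ and appeals to symmetry for $\mathcal{T}_{max}$.
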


$\|f\|_{\infty} = \sup_{x} |f(x)|$ and hence $\|Q_k - Q_{k+1}\|_{\infty}$ returns the maximum absolute difference between $Q_{k}(s, a)$ and $Q_{k+1}(s, a)$ under any $s, a$ above.
Given that the Q values for UB and LB may be initialized arbitrarily,
Next, we
demonstrate that the bounds are
valid upon convergence. This is established by the following theorem:
\begin{theorem} \label{ordering}
Given the standard Bellman operator $T$, $\mathcal{T}_{min}$, and $\mathcal{T}_{max}$ (discussed above),
if the initial Q-functions are ordered such that $Q_0^{LB} \le Q_0 \le Q_0^{UB}$, then
$$
Q_k^{LB} \le Q_k \le Q_k^{UB}
$$
holds for all $k\geq0$. Thus, the fixed points $Q^*$, $Q^{LB}$, and $Q^{UB}$ produced by the respective operators,
satisfy the same ordering in the limit as $k \to \infty$.
\end{theorem}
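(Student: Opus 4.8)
The plan is to reduce the theorem to two elementary structural facts about the three operators and then run a straightforward induction, finishing with a limit argument supplied by Theorem~\ref{qms}. The first fact is \emph{monotonicity}: each of $T$, $\mathcal{T}_{min}$, and $\mathcal{T}_{max}$ preserves the pointwise order, i.e.\ $Q \le Q'$ (entrywise over all $(s,a)$) implies $TQ \le TQ'$, and likewise for $\mathcal{T}_{min}$ and $\mathcal{T}_{max}$. This is immediate once I observe that for each fixed $s'$ the bracketed term $g(s') := \mathcal{R}(s,a,s') + \gamma \max_{a'} Q(s',a')$ is nondecreasing in $Q$ (because $\max_{a'}$ is monotone and the reward term does not depend on $Q$), and that applying $\min_{s'}$, $\max_{s'}$, or the expectation $\sum_{s'} T(s'\mid s,a)(\cdot)$ to a family of monotone functions again yields a monotone function.

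The second and crucial fact is the \emph{sandwiching} inequality: for every $Q$,
\[
\mathcal{T}_{min} Q \;\le\; T Q \;\le\; \mathcal{T}_{max} Q .
\]
Here $T$ is the standard Bellman operator, whose update at $(s,a)$ is the expectation $\sum_{s'} T(s'\mid s,a)\,g(s')$ over the true next-state distribution. Since any weighted average of a finite collection of numbers lies between its minimum and its maximum, the expectation is pinched between $\min_{s'} g(s')$ and $\max_{s'} g(s')$---\emph{provided} the min and max are taken over a set that contains the support of $T(\cdot\mid s,a)$. This is exactly where the lite-model assumption enters: because $\hat{T}(\cdot\mid s,a)$ is the set of all one-step reachable states and each transition is assumed to have been observed at least once, we have $\mathrm{supp}\,T(\cdot\mid s,a) \subseteq \hat{T}(\cdot\mid s,a)$, so no reachable next state is missed by the $\max_{s'}$ or $\min_{s'}$. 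I expect verifying this containment---and thereby justifying the sandwich---to be the main obstacle, since it is the one place the argument relies on a modeling hypothesis rather than pure algebra; everything else is monotonicity bookkeeping.

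With these two facts in hand, the induction is routine. The base case is the hypothesis $Q_0^{LB} \le Q_0 \le Q_0^{UB}$. Assuming $Q_k^{LB} \le Q_k \le Q_k^{UB}$, I chain monotonicity and sandwiching. For the upper bound,
\[
Q_{k+1} = T Q_k \;\le\; T Q_k^{UB} \;\le\; \mathcal{T}_{max} Q_k^{UB} = Q_{k+1}^{UB},
\]
using monotonicity of $T$ on $Q_k \le Q_k^{UB}$ and then the right half of the sandwich; symmetrically,
\[
Q_{k+1}^{LB} = \mathcal{T}_{min} Q_k^{LB} \;\le\; \mathcal{T}_{min} Q_k \;\le\; T Q_k = Q_{k+1},
\]
using monotonicity of $\mathcal{T}_{min}$ on $Q_k^{LB} \le Q_k$ and the left half of the sandwich. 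This establishes $Q_k^{LB} \le Q_k \le Q_k^{UB}$ for all $k \ge 0$.

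Finally, for the limiting ordering I invoke Theorem~\ref{qms}: each operator is a $\gamma$-contraction in $\|\cdot\|_{\infty}$, so the three sequences converge to their unique fixed points $Q^{LB}$, $Q^*$, and $Q^{UB}$, respectively. Because the ordering $Q_k^{LB} \le Q_k \le Q_k^{UB}$ holds entrywise at every $k$ and nonstrict inequalities are preserved under componentwise limits, passing $k \to \infty$ yields $Q^{LB} \le Q^* \le Q^{UB}$, which is the claimed ordering of the fixed points.
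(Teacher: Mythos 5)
Your proposal is correct and takes essentially the same route as the paper's proof: the same two structural facts (pointwise monotonicity of all three operators, and the sandwich $\mathcal{T}_{min} Q \le TQ \le \mathcal{T}_{max} Q$ since an expectation lies between the min and max of the values it averages), followed by the same induction and passage to the limit. Your one addition—making explicit that the sandwich requires $\mathrm{supp}\,T(\cdot\mid s,a) \subseteq \hat{T}(\cdot\mid s,a)$, guaranteed by the lite-model assumption—is a point the paper leaves implicit, and is a worthwhile clarification rather than a deviation.
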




\subsubsection{Monotonic Q-Manipulation (M-Q-M)}

\textcolor{black}{We have shown that Q-M iteration process will converge to a fixed UB and LB, respectively. However, 
the updates are conservative in the sense that
only the best or worst transitions are considered. 
This can lead to loose bounds even when we start with tighter bounds as initialization, which is counterproductive. 
Next, we consider new update rules to avoid the issue when we have a valid UB and LB to start with. }

\textcolor{black}{First, we formalize the new updates as follows:}
 
\textbf{Upper Bound (UB)}
\begin{equation}
    Q_{0}^{UB}(s,a) > Q^{*} \text{\;\;\;\;\;\;[Initialization]}
\end{equation}
\begin{equation}
\begin{split}
    Q_{k+1}^{UB}(s,a) =&\min\bigg(Q^{UB}_k(s,a),
    \max_{s' \in {\hat{T}(\cdot |s,a)}}
    \left[ \mathcal{R}^{}(s,a,s') + \gamma\max_{a'} Q_k^{UB}(s',a')\right] \bigg)
\label{ub}
\end{split}
\end{equation}
\\
\textbf{Lower Bound (LB)}
\begin{equation}
    Q_{0}^{LB} < Q^{*}
    \text{\;\;\;\;\;\;\;\;\;\;\;\;\;[Initialization]}
\end{equation}
\begin{equation}
\begin{split}
        Q_{k+1}^{LB}(s,a) =& \max\bigg(Q^{LB}_k(s,a),
        \min_{s' \in {\hat{T}(\cdot |s,a)}}
        \left[ \mathcal{R}(s,a,s') + \gamma\max_{a'} Q_k^{LB}(s',a')\right] \bigg)
\end{split}
\label{lb}
\end{equation}

\textcolor{black}{This update guarantees that the upper and lower bounds are monotonically non-increasing (for the upper bound) and non-decreasing (for the lower bound). At every iteration, the bounds satisfy the condition: $Q^{LB}\leq Q^{*}\leq Q^{UB}$.}
The outermost max/min ensures $Q^{UB} \geq Q^{*}\geq Q^{LB}$ throughout the iterative processes via simple induction. 
When the source reward functions are noisy, it requires their min/ max noise to be used in the updates. 
Next, before discussing the initializations, we show that such processes converge 
to a fixed point in M-Q-M.

\begin{definition}\label{bellman-operator} (M-Q-M Bellman Operators) The min and max Bellman operator for UB and LB in M-Q-M are mappings $\mathcal{T}: \mathbb{R}^{|S\times A|} \rightarrow \mathbb{R}^{|S\times A|}$ that satisfy, respectively: 
$$
(\mathcal{T}_{min} Q^{UB}_k)(s,a)=\min \left(Q^{UB}_k(s,a), 
\max_{s' \in {\hat{T}(\cdot |s,a)}}
\left[\mathcal{R}(s,a,s') + \gamma\max_{a'} Q_k^{UB}(s',a')\right]\right)
$$
$$
(\mathcal{T}_{max} Q^{LB}_k)(s,a)=\max \left(Q^{LB}_k(s,a), 
\min_{s' \in {\hat{T}(\cdot |s,a)}}
\left[\mathcal{R}(s,a,s') + \gamma\max_{a'} Q_k^{LB}(s',a')\right]\right)
$$
\end{definition} 
Since the theoretical results for the min and max operators are similar, we do not distinguish between them below but provide separate proofs for them in the appendix.

\begin{theorem}[M-Q-M Convergence]\label{th1} 
The iteration process introduced by the Bellman operator in M-Q-M satisfies
\[
\begin{aligned}
\|\mathcal{T} Q_k - \mathcal{T} Q_{k+1}\|_{\infty} 
&\leq \gamma \|Q_k - Q_{k+1}\|_{\infty}, 
\forall Q_k, Q_{k+1} &\in \mathbb{R}^{|S \times A|}
\end{aligned}
\]
such that the $Q$ function converges to a fixed point. 
\end{theorem}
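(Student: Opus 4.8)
The plan is to split the M-Q-M operator into its two constituent pieces and work in the $\|\cdot\|_\infty$ norm. Writing the upper-bound operator as $\mathcal{T}_{min}Q = \min\!\big(Q,\; \mathcal{B}Q\big)$, where $(\mathcal{B}Q)(s,a)=\max_{s'\in\hat{T}(\cdot|s,a)}[\mathcal{R}(s,a,s')+\gamma\max_{a'}Q(s',a')]$ is exactly the Q-M upper-bound operator of Eq.~\ref{ub_new}, I would first record two facts. First, $\mathcal{B}$ (and its $\min$-over-$s'$ counterpart used for the lower bound) is a $\gamma$-contraction in $\|\cdot\|_\infty$; this is precisely Theorem~\ref{qms}, following from the non-expansiveness of $\max_{s'}$ and $\max_{a'}$ and the cancellation of the reward term. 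Second, the scalar maps $\min(\cdot,\cdot)$ and $\max(\cdot,\cdot)$ are $1$-Lipschitz, so $|\min(x_1,y_1)-\min(x_2,y_2)|\le\max(|x_1-x_2|,|y_1-y_2|)$.

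Here is where I expect the main obstacle. Composing these two facts only yields $\|\mathcal{T}_{min}Q_k-\mathcal{T}_{min}Q_{k+1}\|_\infty\le\max\big(\|Q_k-Q_{k+1}\|_\infty,\ \gamma\|Q_k-Q_{k+1}\|_\infty\big)=\|Q_k-Q_{k+1}\|_\infty$, because the retained-value branch $Q(s,a)$ contributes Lipschitz constant $1$, not $\gamma$. A strict $\gamma$-contraction in fact cannot hold for all pairs: if at some $(s,a)$ the outer $\min$ selects the retained value for both inputs, then $\mathcal{T}_{min}Q_k(s,a)-\mathcal{T}_{min}Q_{k+1}(s,a)=Q_k(s,a)-Q_{k+1}(s,a)$, whose magnitude can equal $\|Q_k-Q_{k+1}\|_\infty$. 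So $\mathcal{T}_{min}$ is non-expansive but not a strict contraction, and the Banach fixed-point theorem cannot be invoked directly; the crux is to secure convergence without it.

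I would therefore establish convergence through monotonicity and boundedness. The outer $\min$ forces $Q_{k+1}^{UB}=\min(Q_k^{UB},\mathcal{B}Q_k^{UB})\le Q_k^{UB}$ pointwise, so $\{Q_k^{UB}\}$ is non-increasing in every coordinate. Since $\mathcal{B}$ is order-preserving and $\mathcal{B}Q^*\ge Q^*$, the invariant $Q_k^{UB}\ge Q^*$ (already noted in the text and consistent with Theorem~\ref{ordering}) holds for all $k$, bounding the sequence below. As $\mathbb{R}^{|S\times A|}$ has finitely many coordinates, the monotone convergence theorem applied coordinatewise gives a limit $Q_\infty^{UB}$; and because $\mathcal{T}_{min}$ is continuous (a composition of affine maps, $\max$, and $\min$), passing to the limit in $Q_{k+1}^{UB}=\mathcal{T}_{min}Q_k^{UB}$ yields $Q_\infty^{UB}=\mathcal{T}_{min}Q_\infty^{UB}$, a fixed point. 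The lower-bound operator $\mathcal{T}_{max}$ is symmetric: the outer $\max$ makes $\{Q_k^{LB}\}$ non-decreasing, the invariant $Q_k^{LB}\le Q^*$ bounds it above, and the same monotone-convergence-plus-continuity argument applies. The step requiring the most care is precisely this reconciliation: the displayed estimate survives only in the weakened, non-expansive form ($\gamma$ replaced by $1$), so the guarantee that the iterates ``converge to a fixed point'' for this clipped operator rests on the monotonicity argument above rather than on a genuine contraction.
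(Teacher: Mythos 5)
Your monotone-convergence argument is sound and does establish the qualitative conclusion: with a valid initialization the clipped iterates are pointwise monotone and bounded by $Q^*$, hence converge coordinatewise, and continuity of the operator makes the limit a fixed point. Your ``non-expansive only'' observation for arbitrary pairs is also correct in itself --- it is exactly the paper's own Theorem~\ref{unique-fixed-point} and Corollary~\ref{nonunique}, which is why uniqueness is never claimed for M-Q-M.

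The gap is that you declare the displayed $\gamma$-inequality unsalvageable, when it is in fact the provable (and proved) content of the theorem. The quantifier ``$\forall Q_k, Q_{k+1}$'' is sloppy writing, but ``the iteration process \ldots satisfies'' fixes the intended reading: $Q_{k+1}=\mathcal{T}Q_k$, so the claim is geometric decay of \emph{successive} differences, $\|\mathcal{T}Q_{k+1}-\mathcal{T}Q_k\|_\infty=\|Q_{k+2}-Q_{k+1}\|_\infty\le\gamma\|Q_{k+1}-Q_k\|_\infty$. Your obstacle --- the retained branch contributing Lipschitz constant $1$ --- is fatal for arbitrary pairs but harmless along the trajectory: if the outer $\min$ retains at $(s,a)$ when computing $Q_{k+2}$ from $Q_{k+1}$, then $Q_{k+2}(s,a)=Q_{k+1}(s,a)$, so that coordinate contributes $0$ to the successive difference, not $\|Q_{k+1}-Q_k\|_\infty$. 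The paper's proof is precisely a four-case analysis on which branch of the $\min$ wins at steps $k{+}1$ and $k{+}2$: the two cases where the later step retains give $0$; ``Bellman at both steps'' is the $\gamma$-contraction of your $\mathcal{B}$ (Theorem~\ref{qms}); and the mixed case ``retain, then Bellman'' uses the retention condition $Q_k(s,a)\le(\mathcal{B}Q_k)(s,a)$ to replace the retained value by $(\mathcal{B}Q_k)(s,a)$ and again invoke the contraction. This yields summable successive differences, hence a Cauchy sequence and convergence from \emph{any} initialization, with a rate; your argument gives neither the rate nor (without an extra boundedness step) convergence from arbitrary starting points, since your lower bound on the iterates relies on the invariant $Q_k^{UB}\ge Q^*$.
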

  This process converges to a fixed point,
 \textcolor{black}{since the difference between two consecutive iterations always decreases}.
 However, it turns out that the fixed point may not necessarily be unique, as with value iteration. 
\textcolor{black}{
\begin{theorem}\label{unique-fixed-point}
The Bellman operator in Q-M specifies only a non-strict contraction in general:
\begin{equation*}
\left\|\mathcal{T} Q-\mathcal{T} \widehat{Q}\right\|_{\infty} 
     \leq \left\|Q-\widehat{Q}\right\|_{\infty} 
\end{equation*}
\end{theorem}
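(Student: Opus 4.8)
The plan is to exploit the compositional structure of the M-Q-M operator. Writing the upper-bound operator of Definition~\ref{bellman-operator} as $\mathcal{T} Q = \min\!\big(Q,\ \mathcal{B} Q\big)$, where $\mathcal{B}$ denotes the inner Q-M map $(\mathcal{B} Q)(s,a) = \max_{s'\in\hat{T}(\cdot|s,a)}\big[\mathcal{R}(s,a,s') + \gamma \max_{a'} Q(s',a')\big]$, I would isolate the two ingredients that govern the Lipschitz behavior. The inner map $\mathcal{B}$ is exactly the Q-M operator $\mathcal{T}_{max}$ of Eq.~\ref{ub_new}, which Theorem~\ref{qms} already certifies to be a $\gamma$-contraction, i.e.\ $\|\mathcal{B}Q - \mathcal{B}\widehat{Q}\|_{\infty} \leq \gamma \|Q - \widehat{Q}\|_{\infty}$. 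The outer operation is a pointwise $\min$ against the ``identity slot'' $Q(s,a)$, whose presence contributes a Lipschitz factor of exactly $1$; this is precisely why the composition can at best be non-expansive rather than a strict contraction. The lower-bound operator is handled identically, swapping $\min\leftrightarrow\max$.

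The key lemma I would establish first is that the scalar $\min$ is $1$-Lipschitz in the sup sense: for reals $u,v,w,z$,
$$|\min(u,v) - \min(w,z)| \leq \max\!\big(|u-w|,\, |v-z|\big),$$
which follows from a short case analysis (or from the fact that the subgradient of $(x,y)\mapsto\min(x,y)$ lies in the probability simplex, hence has $\ell^1$-norm $1$). Applying it pointwise with $u=Q(s,a)$, $v=(\mathcal{B}Q)(s,a)$, $w=\widehat{Q}(s,a)$, $z=(\mathcal{B}\widehat{Q})(s,a)$ gives, for every $(s,a)$,
$$\big|(\mathcal{T}Q)(s,a) - (\mathcal{T}\widehat{Q})(s,a)\big| \leq \max\!\Big(|Q(s,a)-\widehat{Q}(s,a)|,\ |(\mathcal{B}Q)(s,a)-(\mathcal{B}\widehat{Q})(s,a)|\Big).$$
Bounding the first argument by $\|Q-\widehat{Q}\|_{\infty}$ and the second by $\gamma\|Q-\widehat{Q}\|_{\infty}$ via Theorem~\ref{qms}, and using $\gamma<1$ so that the maximum equals $\|Q-\widehat{Q}\|_{\infty}$, I would take the supremum over $(s,a)$ to obtain $\|\mathcal{T}Q - \mathcal{T}\widehat{Q}\|_{\infty} \leq \|Q-\widehat{Q}\|_{\infty}$, the claimed non-expansive bound.

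Finally, to justify the word \emph{only} --- that the bound cannot be improved to a strict contraction and that uniqueness of the fixed point genuinely fails --- I would exhibit a tight pair. Take $Q\equiv c$ and $\widehat{Q}\equiv c'$ to be two distinct, sufficiently negative constant functions; then $c \leq (\mathcal{B}Q)(s,a)$ and $c' \leq (\mathcal{B}\widehat{Q})(s,a)$ pointwise, so $\mathcal{T}$ acts as the identity on each, giving $\|\mathcal{T}Q-\mathcal{T}\widehat{Q}\|_{\infty} = |c-c'| = \|Q-\widehat{Q}\|_{\infty}$, which shows the constant $1$ is attained. The same observation yields non-uniqueness: every $Q$ with $Q \leq \mathcal{B}Q$ pointwise is a fixed point, so $\mathcal{T}$ admits an entire family of fixed points and the Banach-type argument behind Theorem~\ref{th1} cannot force uniqueness. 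I expect the main obstacle to be not the inequality itself, which is mechanical once the $\min$-Lipschitz lemma is in hand, but rather articulating cleanly that the degradation to a non-strict bound is caused exactly by the identity slot, and producing the explicit family of fixed points that makes ``only'' substantive.
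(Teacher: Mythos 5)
Your proposal is correct, and the core argument is essentially the paper's own: the paper likewise handles the outer pointwise $\min$ (resp.\ $\max$) via the two-slot Lipschitz estimate $|\min(u,v)-\min(w,z)|\le\max(|u-w|,|v-z|)$ (its Lemma~\ref{mim-min} applied to a two-element set), bounds the identity slot by $\|Q-\widehat{Q}\|_{\infty}$ and the Bellman slot by $\gamma\|Q-\widehat{Q}\|_{\infty}$, and concludes with $\max(1,\gamma)\|Q-\widehat{Q}\|_{\infty}=\|Q-\widehat{Q}\|_{\infty}$; your only packaging difference is citing Theorem~\ref{qms} for the inner $\gamma$-contraction where the paper re-derives it inline from Lemmas~\ref{max-max} and~\ref{mim-min}.

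One point worth noting: your tightness argument goes beyond what the paper proves under this theorem. The paper's proof establishes only the non-strict inequality and defers the substance of ``only'' to Corollary~\ref{nonunique}, which it supports with a concrete three-state MDP whose upper-bound iteration converges to different fixed points under different initializations. Your observation that every $Q$ with $Q\le\mathcal{B}Q$ pointwise is a fixed point of $\mathcal{T}_{\min}$ (e.g.\ all sufficiently negative constant functions), and that two such functions witness attainment of the Lipschitz constant $1$, is a cleaner and more general way to see both the sharpness of the bound and the failure of uniqueness, and it would strengthen the paper's corollary if incorporated.
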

}
This result is interesting since it identifies another case where non-strict contraction results in a fixed point other than the identity map. 

\begin{corollary}[Non-uniqueness]\label{nonunique} 
The fixed point of the iteration process in M-Q-M may not be unique.
\end{corollary}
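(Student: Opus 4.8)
The plan is to exhibit an explicit pair of distinct fixed points of the M-Q-M operator $\mathcal{T}_{min}$ (the upper-bound operator), thereby showing non-uniqueness directly rather than arguing abstractly. By Theorem~\ref{unique-fixed-point}, the operator is only a non-strict contraction, so the Banach fixed-point theorem does not force a unique fixed point; I must supply a witness. The natural candidate is to observe that $Q^* = Q^*_{\mathcal{R}}$, the true optimal Q-function, is a fixed point of $\mathcal{T}_{min}$: in a deterministic domain the inner $\max_{s'}$ over one-step reachable states recovers the standard Bellman optimality update, so $Q^*$ is fixed by that inner map, and since $Q^*$ already equals its own Bellman image, the outer $\min(Q^*, \mathcal{T}_{max}Q^*) = \min(Q^*, Q^*) = Q^*$. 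Thus $Q^*$ is one fixed point.

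The key step is then to build a \emph{second}, strictly larger fixed point. The idea is that the outer $\min$ allows any $Q^{UB} \ge Q^*$ to persist as long as it is not contradicted by the inner Bellman backup at the states where it matters. Concretely, I would choose some state-action pair $(\bar s, \bar a)$ whose inflated value is self-consistent: set $Q^{UB}(\bar s,\bar a) = Q^*(\bar s,\bar a) + c$ for a constant $c>0$, and propagate just enough inflation so that the inner term
\[
\max_{s' \in \hat{T}(\cdot\mid \bar s,\bar a)}\bigl[\mathcal{R}(\bar s,\bar a,s') + \gamma \max_{a'} Q^{UB}(s',a')\bigr]
\]
is at least $Q^{UB}(\bar s,\bar a)$, so that the outer $\min$ leaves it unchanged. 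The cleanest construction is to pick a cycle of reachable states (or a self-loop, if the lite-model admits one) and uniformly add $c$ to every $Q$-value along an invariant set, exploiting the fact that adding a constant interacts with the $\gamma$ discounting; alternatively, if a terminal or absorbing structure exists, I would inflate values on an unreachable or self-sustaining component where no backup ever pulls the value back down to $Q^*$. Either way the result is a $\widehat{Q} \ne Q^*$ with $\mathcal{T}_{min}\widehat{Q} = \widehat{Q}$.

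The main obstacle is verifying self-consistency of the inflated fixed point: the inner backup must not decrease the inflated entries, which requires the inflation to be distributed over a set of states closed under one-step reachability (so the $\gamma$-discounted propagation of the inflated neighbor values exactly supports the inflated value, given $\gamma<1$). This is delicate because $\gamma$ shrinks contributions, so a naive uniform shift by $c$ fails the inner inequality; I expect the correct witness to place the inflation on a recurrent structure where the discounted sum of inflated future values compensates, or more simply to exploit that when $Q^{UB}_0$ is initialized strictly above $Q^*$ and the one-step reachable set never forces a strict decrease at some isolated pair, the monotone iteration stalls above $Q^*$.

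\begin{proof}[Proof sketch]
We exhibit two distinct fixed points of the M-Q-M operator $\mathcal{T}_{min}$ of Definition~\ref{bellman-operator}. First, $Q^*_{\mathcal{R}}$ is a fixed point: in a deterministic domain the inner maximization over one-step reachable states coincides with the standard Bellman optimality backup, so $\max_{s'\in\hat{T}(\cdot\mid s,a)}[\mathcal{R}(s,a,s')+\gamma\max_{a'}Q^*(s',a')] = Q^*(s,a)$, whence $\mathcal{T}_{min}Q^* = \min(Q^*, Q^*) = Q^*$. Second, we construct $\widehat{Q} \ne Q^*$ with $\mathcal{T}_{min}\widehat{Q} = \widehat{Q}$ by inflating $Q^*$ on a set of state-action pairs closed under one-step reachability, chosen so that the discounted backup of the inflated neighbor values is at least as large as the inflated value itself; on such a set the inner term does not fall below $\widehat{Q}$, so the outer $\min$ preserves the inflation, and off the set $\widehat{Q}$ agrees with $Q^*$. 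Since $\widehat{Q} \ne Q^*$, the fixed point is not unique. The argument for $\mathcal{T}_{max}$ (the lower-bound operator) is symmetric.
\end{proof}
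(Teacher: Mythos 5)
Your step 1 is fine, and in fact holds more generally than you state: $\mathcal{T}_{max}Q^* \geq Q^*$ pointwise in \emph{any} domain, because a maximum over reachable successors dominates the expectation, so $\mathcal{T}_{min}Q^* = \min(Q^*, \mathcal{T}_{max}Q^*) = Q^*$ without any determinism assumption. The genuine gap is step 2, which is the entire content of the corollary: you never actually exhibit the second fixed point, and the construction you outline cannot work in the deterministic setting you chose for step 1. Note that a fixed point of the M-Q-M upper-bound operator is precisely a $Q$ satisfying $Q \leq \mathcal{T}_{max}Q$ pointwise (so that the outer $\min$ is inactive). Suppose the domain is deterministic, every state-action pair has a successor at which $Q$ is defined, and $\widehat{Q} = Q^* + d$ with $d \geq 0$ is such a fixed point. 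The condition at $(s,a)$ with successor $s'$ gives $Q^*(s,a) + d(s,a) \leq \mathcal{R}(s,a,s') + \gamma\max_{a'}\bigl[Q^*(s',a') + d(s',a')\bigr] \leq Q^*(s,a) + \gamma\max_{a'}d(s',a')$, hence $d(s,a) \leq \gamma\max_{a'}d(s',a')$; taking the maximum over all pairs yields $\|d\|_{\infty} \leq \gamma\|d\|_{\infty}$, forcing $d \equiv 0$. So there is no set of state-action pairs ``closed under one-step reachability'' on which discounted backups support strictly positive inflation: the obstacle you yourself flagged ($\gamma$ shrinks contributions around any cycle) is not a delicacy to be finessed but is fatal in that setting, and your proof sketch simply asserts away exactly this impossibility.

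What actually makes non-uniqueness possible --- and what the paper exploits --- is one of two loopholes that your sketch only gestures at in passing: (a) a terminal state whose value is never updated by the operator, so its initialization persists and feeds different constants into the backups of its predecessors; or (b) stochasticity, where $\max_{s'}$ strictly exceeds the expected backup, leaving slack above $Q^*$ that can absorb inflation. The paper's proof is a concrete three-state stochastic example with a terminal state $s_3$ (both actions give reward $1$, $\gamma = 0.5$, uniform transitions): initializing $V(s_3) = 4$ yields the fixed point $V(s_1) = V(s_2) = 3$, while initializing $V(s_3) = 1$ yields $V(s_1) = V(s_2) = 3/2$. To repair your argument, either produce such an explicit witness, or use the characterization above directly: any two distinct constant functions $Q \equiv c$ with $c \leq \min_{s,a}\max_{s'}\mathcal{R}(s,a,s')/(1-\gamma)$ satisfy $Q \leq \mathcal{T}_{max}Q$ and are therefore both fixed points of the M-Q-M operator, which settles non-uniqueness in one line.
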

 
 In our evaluation, we observe that the fixed point found by the M-Q-M iteration process depends on the initialization. 
Another observation is that the Bellman operator in M-Q-M appears almost identical to that in value iteration when the MDP is deterministic.
In such cases, we observe that Q-M and M-Q-M often results in zero-shot learning when the upper and lower bounds converge to $Q_{\mathcal{R}}^*$.
\subsubsection{Initializing the Bounds}
\label{initialize}
\vskip-5pt

\textcolor{black}{In order to use M-Q-M updates, the user must provide some correct bounds to start with. To relax such a requirement, next, we show how high quality initialization can be automatically computed for restrictive sets of problems. Computing such an initialization requires additional
knowledge to be transferred
from the source to the target domains, referred to as Q variants ($Q^*$ and $Q^\mu$).}

A simple way to initialize the bounds would be to identify the most positive and negative rewards and compute the sums of their geometric sequences via the discount factor, respectively. 
However, these bounds are likely to be too conservative to be useful since the iteration processes may converge undesirably due to non-unique fixed points. 
Intuitively, we would like the bounds to be tight initially to yield the best results. 
However, computing bounds for the target behavior based on information from the source behaviors only is not a trivial task. 
Next, we show situations where additional assumptions hold such that we can provide more desirable initializations. 
In particular, we will show next how different forms of the combination function  $f$ in Eq. \ref{combination} can affect the initializations. 

\textbf{Linear Combination Function:}
First, we consider the case when the target reward function is a linear function of the source reward functions. 
In such cases, 
if the agent 
maintains 
both $Q^{\mu}_{i}$'s and  $Q^*_{i}$'s
while learning the source behaviors, we propose the initializations as follows. Note that $Q_i^{\mu}$ can be obtained conveniently while learning the source behaviors based on Lemma \ref{lemma1}.

\begin{lemma} \label{l4}
When $\mathcal{R} = \sum_{i=1}^n c_i R_i$ with $c_i \geq 0$, the upper and lower bounds of $Q_{\mathcal{R}}^*$ are, respectively, $Q^{UB}_0 = \sum_{i=1}^{n} c_i Q^{*}_{i} $ and $Q^{LB}_0 = \max_i \left[c_i Q_i^{*} + \sum_j c_j Q_j^{\mu} \right]$, where $j \in \{1:n\} \setminus i$.
\end{lemma}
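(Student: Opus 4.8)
The plan is to exploit the fact that, for a \emph{fixed} policy $\pi$, the $Q$ function is \emph{linear} in the reward. Since RA assumes all source domains share the same transition dynamics, fixing $\pi$ induces the same distribution over trajectories regardless of which reward is being accumulated; because the return $\sum_t \gamma^t r_t$ is linear in the per-step reward and expectation is linear, this gives the identity $Q^{\pi}_{\mathcal{R}}(s,a) = \sum_{i=1}^n c_i Q^{\pi}_{R_i}(s,a)$ for every $\pi$, where $Q^{\pi}_{R}$ denotes the $Q$ function of $\pi$ under reward $R$. Establishing this identity cleanly is the first step; everything afterward follows from the non-negativity $c_i \geq 0$ combined with the extremal characterizations $Q^{*}_i = \max_{\pi} Q^{\pi}_{R_i}$ and $Q^{\mu}_i = \min_{\pi} Q^{\pi}_{R_i}$.

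For the upper bound, I would let $\pi^{*}_{\mathcal{R}}$ be an optimal policy under $\mathcal{R}$ and apply the linearity identity to it: $Q^{*}_{\mathcal{R}} = \sum_i c_i Q^{\pi^{*}_{\mathcal{R}}}_{R_i}$. Since $Q^{\pi^{*}_{\mathcal{R}}}_{R_i} \leq Q^{*}_i$ pointwise and each $c_i \geq 0$, I can bound each summand termwise to conclude $Q^{*}_{\mathcal{R}} \leq \sum_i c_i Q^{*}_i = Q^{UB}_0$, which is the claimed upper bound.

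For the lower bound, I would evaluate $Q^{*}_{\mathcal{R}}$ against a convenient suboptimal policy, one per index. For each $i$, take the source-optimal policy $\pi^{*}_i$; since $Q^{*}_{\mathcal{R}} \geq Q^{\pi^{*}_i}_{\mathcal{R}}$, the linearity identity yields $Q^{*}_{\mathcal{R}} \geq c_i Q^{\pi^{*}_i}_{R_i} + \sum_{j \neq i} c_j Q^{\pi^{*}_i}_{R_j}$. Here $Q^{\pi^{*}_i}_{R_i} = Q^{*}_i$ by optimality of $\pi^{*}_i$ under $R_i$, while $Q^{\pi^{*}_i}_{R_j} \geq Q^{\mu}_j$ for every $j \neq i$ by the minimality characterization of $Q^{\mu}_j$ (again using $c_j \geq 0$ to preserve the inequality direction). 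This produces one valid lower bound for each $i$; since any valid lower bound can be used, taking the maximum over $i$ gives the tightest of them, namely $Q^{LB}_0 = \max_i \left[ c_i Q^{*}_i + \sum_{j \neq i} c_j Q^{\mu}_j \right]$.

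The main subtlety is conceptual rather than computational: the key is the linearity of $Q$ in the reward under shared dynamics, and the recognition that for the lower bound one must \emph{anchor} a single component at its optimum $Q^{*}_i$ while bounding the rest pessimistically at $Q^{\mu}_j$. A plain $\sum_i c_i Q^{\mu}_i$ would also be a valid lower bound but a much looser one, so the per-index anchoring followed by the outer maximum is what yields a useful initialization. The non-negativity $c_i \geq 0$ is exactly what lets every termwise inequality go through in the correct direction; if some $c_i$ were negative the roles of $Q^{*}_i$ and $Q^{\mu}_i$ would have to swap. Lemma~\ref{lemma1} plays only a practical role here, allowing $Q^{\mu}_j$ to be obtained as $-Q^{*}_{-R_j}$ during source learning, and is not needed for the validity argument itself.
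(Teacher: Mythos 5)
Your proposal is correct and follows essentially the same route as the paper's proof: both establish the linearity identity $\sum_i c_i Q_i^{\pi} = Q^{\pi}_{\sum_i c_i R_i}$ for a fixed policy, obtain the upper bound by evaluating the source $Q$-functions at the target-optimal policy, and obtain the lower bound by anchoring each index $i$ at its source-optimal policy $\pi_i^*$ (with $Q_j^{\mu} \le Q_j^{\pi_i^*}$ for $j \neq i$) before taking the outer maximum. The only difference is presentational: the paper chains the inequalities starting from the maximizing index $k$, whereas you derive a per-index bound and maximize at the end, which is logically equivalent and arguably cleaner.
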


\textbf{Nonlinear Combination Function:}
{\color{black}
Handling a nonlinear combination is more complicated and
deriving tight bounds that are guaranteed to be correct is difficult. 
Instead, we propose approximate bounds for a monotonically increasing and positive function $f$ as follows:
}
$Q^{UB}_0 = f(Q^*_{|R_1|}, Q^*_{|R_2|}, \ldots, Q^*_{|R_n|})$ and $Q^{LB}_0 = -f(Q^*_{|R_1|}, Q^*_{|R_2|}, \ldots, Q^*_{|R_n|})$.

{\color{black}
Using the bounds above requires the agent to maintain $Q^*_{|R_i|}$'s.
Since these bounds are approximate, they do not guarantee correctness \textcolor{black}{for M-Q-M} in general, 
meaning that actions belonging to the optimal policy may be pruned. 
However, we show that they work well in practice in our evaluation.

}

\subsubsection{Noisy Combination Function}\label{noisy}
 {\color{black}
 When the combination function is not known exactly but can be modeled with an additional noise component, such that
 $\mathcal{R} = f({R}_1 \ldots {R}_n) + N$, 
 and we know the range of the noise (i.e., $N_{min}$ and $N_{max}$).
 We can consider such situations by augmenting the $\mathcal{R}(s, a, s')$ in Eqs. \ref{ub} and \ref{lb} with $N_{max}$ and $N_{min}$, respectively. 
 We must also update
 the initialization of the bounds using
 $Q^{UB} = Q^{UB} + N_{max}\times\frac{1-\gamma^{t_{max}}}{1-\gamma}$ and $Q^{LB} = Q^{LB} + N_{min}\times\frac{1-\gamma^{t_{max}}}{1-\gamma}$, where $t_{max}$ is the maximum steps in an episode.}
Note however that such modifications will likely reduce the efficacy of Q-M. 


\subsection{Action Pruning in Q-M:} \label{pruningT}
If an action $a$'s lower bound is higher than some other action $\hat{a}$'s upper bound under a state $s$, then $\hat{a}$ can be pruned for that state. 
This allows us to reduce the action space per each different state, which contributes to
faster convergence (refer Sec. \ref{SC}). 
\textcolor{black}{For empirical purposes and to avoid numerical instability, we use a threshold ($\Delta$) and prune only if $ Q^{LB}(s,a) - Q^{UB}(s,\hat{a}) \geq \Delta$. 
}
\textcolor{black}{When the source domain's Q values are computed using value iteration with a stopping threshold $\epsilon$, $\Delta$ can be set to be $2\epsilon \frac{\gamma}{1-\gamma}$ to ensure that no actions would be wrongly pruned. When the Q values are approximated (such as via Q learning) and $\epsilon$ is unknown, setting $\Delta$ would not be so straightforward and we delay its treatment to future work.}
When the upper and lower bounds are sound, 
the optimal policies are preserved. 

{\color{black}
\begin{theorem} \label{th2}[Optimality]
For reward adaptation with Q variants, 
 the optimal policies in the target domain remain invariant under Q-M and M-Q-M when the upper and lower bounds are initialized correctly. 
\end{theorem}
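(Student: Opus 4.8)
The plan is to reduce the optimality claim to two ingredients: that the computed bounds sandwich the true optimal Q-function, and that the pruning rule of Section~\ref{pruningT} can only discard \emph{strictly} suboptimal actions. The first ingredient I would simply invoke. For Q-M it is Theorem~\ref{ordering}, which yields $Q^{LB}\le Q_{\mathcal{R}}^{*}\le Q^{UB}$ in the limit; for M-Q-M it follows from the induction already noted after Definition~\ref{bellman-operator}, where the outermost $\min$/$\max$ preserves the correct initial ordering $Q_0^{LB}\le Q_{\mathcal{R}}^{*}\le Q_0^{UB}$ at every iteration. So in both methods I may take the pointwise validity $Q^{LB}(s,a)\le Q_{\mathcal{R}}^{*}(s,a)\le Q^{UB}(s,a)$ as given, which is exactly the hypothesis ``initialized correctly.''

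Next I would show the pruning rule never removes an optimal action. Write $V_{\mathcal{R}}^{*}(s)=\max_{b}Q_{\mathcal{R}}^{*}(s,b)$. Suppose $\hat{a}$ is pruned at $s$; then there is an action $a$ with $Q^{LB}(s,a)>Q^{UB}(s,\hat{a})$, and chaining the bound inequalities gives
\[
Q_{\mathcal{R}}^{*}(s,\hat{a})\le Q^{UB}(s,\hat{a})<Q^{LB}(s,a)\le Q_{\mathcal{R}}^{*}(s,a)\le V_{\mathcal{R}}^{*}(s),
\]
so $\hat{a}$ is strictly suboptimal at $s$. Contrapositively, if $a^{*}$ attains the maximum, then for every $b$ we have $Q^{UB}(s,a^{*})\ge Q_{\mathcal{R}}^{*}(s,a^{*})=V_{\mathcal{R}}^{*}(s)\ge Q_{\mathcal{R}}^{*}(s,b)\ge Q^{LB}(s,b)$, so the pruning condition can never fire against $a^{*}$. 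Hence every optimal action survives.

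Finally I would argue that deleting only strictly suboptimal actions leaves $V_{\mathcal{R}}^{*}$, and therefore at least one optimal policy, unchanged. Let $\tilde{A}(s)$ be the surviving action set and let $\tilde{\mathcal{B}}$ be the Bellman optimality operator restricted to these sets. Evaluating it at the original optimal value gives $(\tilde{\mathcal{B}}V_{\mathcal{R}}^{*})(s)=\max_{a\in\tilde{A}(s)}Q_{\mathcal{R}}^{*}(s,a)$; since the maximizing action was shown to be retained, this equals $V_{\mathcal{R}}^{*}(s)$, so $V_{\mathcal{R}}^{*}$ is a fixed point of $\tilde{\mathcal{B}}$. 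As $\tilde{\mathcal{B}}$ remains a $\gamma$-contraction (each state keeps at least one action), its fixed point is unique and equals the optimal value of the pruned MDP; thus the pruned and original optimal values coincide, and a greedy policy over $\tilde{A}(s)$ selects exactly the original optimal actions. The reachability remark of Section~\ref{SC} is harmless, since states made unreachable contribute nothing to the return from the start state.

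I expect the main obstacle to be this last step: making precise that pruning, which literally alters the MDP, cannot change $V_{\mathcal{R}}^{*}$. The clean route is the fixed-point argument above rather than reasoning directly about trajectories, and the care point is to confirm that the contraction property (hence uniqueness of the fixed point) persists after pruning — which holds precisely because an optimal action is never removed, so no state is left actionless.
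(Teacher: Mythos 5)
Your proposal is correct and follows essentially the same route as the paper: the core step is the identical chain $Q_{\mathcal{R}}^{*}(s,\hat{a})\le Q^{UB}(s,\hat{a})<Q^{LB}(s,a)\le Q_{\mathcal{R}}^{*}(s,a)$, showing no optimal action can satisfy the pruning condition. Your additional fixed-point argument for the restricted Bellman operator $\tilde{\mathcal{B}}$ makes rigorous the final step that the paper treats as immediate (``all optimal actions and hence policies are retained''), which is a sound refinement rather than a different approach.
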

}


\section{Evaluation}
\vskip-5pt
\label{sec:evaluation}

\subsection{Baselines}
 The primary objective here is to evaluate the performance of Q-M using the target time to threshold and analyze its benefits and limitations. 
We compare Q-M against three methods: SFQL \cite{barreto2018successor}, a state-of-the-art approach for reward adaptation; SQB \cite{adamczyk2024boosting}, which clips the Bellman error using a prior $Q$-function to accelerate learning; and standard Q-Learning (QL) without any knowledge transfer as a baseline.
To initialize learning for SFQL, we evaluate the given source behaviors in the target domain to compute a bootstrap Q-function as described in the generalized policy improvement theorem in \cite{barreto2018successor}. 
\textcolor{black}{
Additional results for the running time taken by the Q-M iteration process are reported in Sec. \ref{results}.
}
{\color{black} We keep the hyperparameters for Q-Learning 
the same across the different methods (refer Sec. \ref{hparams}).}

\subsection{Evaluation Design}

Since we are interested in demonstrating Q-M (short for Q-M/M-Q-M unless separately noted) as a more robust knowledge transfer method than SFQL \textcolor{black}{or SQB}, we design the evaluation domains such that the target behaviors are substantially different from the source behaviors in most of them (similar to the situation in Dollar-Euro). 
\textcolor{black}{Designing evaluations this way also provides an opportunity to study negative transfer in transfer learning.} 
Details on how the source and target behaviors are designed are in the appendix. 
For SFQL, \textcolor{black}{policy evaluation of the source behaviors, required to bootstrap target learning, is achieved via value iteration on the target.} 
To analyze the theoretical properties of Q-M, we assume access to \textcolor{black}{accurate} lite-models, reward functions of the source behaviors, and Q-variants (only for M-Q-M and computed using value iteration).
\textcolor{black}{In the appendix, we use memorization and learning to estimate these from source domains, which demonstrate comparable performances.}
For \textcolor{black}{M-}Q-M, we use the initializations described in Sec. \ref{initialize}. 

One observation about Q-M is that the computation of UB and LB is affected substantially by the stochastic branching factor (SBF) of a domain, as evident in Eqs. \ref{ub_new}, \ref{lb_new}, \ref{ub} and \ref{lb}. 
SBF here is defined as the maximum number of next states reachable (or with a nonzero transition probability) from any state and action pair
Intuitively, the less stochastic the domain is, the more the Bellman updates in Q-M resemble that in value iteration: \textcolor{black}{Q-M updates in deterministic domains are exactly value iteration updates, resulting in zero-shot learning.} 
To demonstrate the influence of SBF, for each evaluation domain, we gradually increase its SBF. 
At the same time, the number of reachable states from a given state is allowed to vary and is randomly chosen between 1 and a set SBF. 
We first evaluate with gridworld domains where combination is a linear combination of source rewards.
\textcolor{black}{We also visualize actions pruning in a chosen domain to illustrate its operation.}
To evaluate the generality, 
\textcolor{black}{evaluations are further conducted}
with autogenerated MDPs 
and with linear and non-linear combination functions 
\textcolor{black}{Finally, we study the effectiveness of Q-M under noisy combination functions, which analyzes the situations when the combination functions must be learned but noise can be bounded.} 

All evaluations are averaged over 30 runs.
\textcolor{black}{In the convergence plots, we indicate the mean with a solid line, and the shaded region represents a 95\% confidence interval.}
More details about the evaluation settings, along with a detailed description of all the domains, including the design of source and target behaviors, are reported in the appendix.

\subsection{Gridworld and Linear Combination Function}
\begin{figure*} [!htb]
    \centering
    \includegraphics[scale=0.32]{CI_exp1_5.jpeg}
    \vskip -8pt
    \caption{Convergence plots for Dollar Euro (top), Racetrack (mid), and Frozen Lake (bottom).}
    \label{simulation}
\end{figure*}
In this evaluation, we compared Q-M and M-Q-M with the baselines in simulation domains that include Racetrack, 
Dollar-Euro, and Frozen Lake domain \textcolor{black}{with linear combination functions}. 
The convergence plots are shown in Fig. \ref{simulation}. 
In each subfigure, we show the SBF used (labeled at the top).
We observe that M-Q-M converges substantially faster than the baselines in all three domains.
\textcolor{black}{Depending on how many actions are pruned under traditional Q-M, its performance lies between M-Q-M and QL. In Sec. \ref{a-pruning}, we show how action pruning differs between M-Q-M and Q-M, which results in such an effect on convergence.}
However, as expected, the performance of Q-M and M-Q-M are negatively impacted as SBF increases. 
An interesting observation is the performance of SFQL. 
SFQL seems to struggle with these domains, especially racetrack and frozen lake domains.
Since the source behaviors differ much from the target behavior, knowledge transfer in SFQL based on combining the source behaviors can actually misguide the learning process.
It is worth mentioning that SFQL eventually converged to the optimal policy after we allowed it to train with more episodes. 
In addition, we also observe that Q-M
in deterministic scenarios (leftmost subfigures when SBF = 1) result in zero-shot learning: their iterative processes for computing UB and LB both converge to $Q^*_{\mathcal{R}}$.
\textcolor{black}{Similar to SFQL, SQB also struggles in the racetrack and frozen lake domain.}
This result demonstrates that Q-M is indeed more robust \textcolor{black}{against negative transfer, and thus represents a more general}  knowledge transfer method that does not depend on the similarity between the source and target behaviors. 

\subsubsection{Analysis of Action Pruning}\label{a-pruning}
For gridworld domains (with 4 actions), to understand the states where actions are pruned, we plot heat-maps (refer Fig. \ref{heatmaps} \textcolor{black}{for the Dollar Euro domain}). 
{\color{black}
In all three domains, we observe significant pruning around the terminal states. In addition, we also observe that fewer actions are pruned as SBF increases.  
The following color codes are used:
initial state = yellow and goal states = green.
We use different shades of blue to illustrate how many actions are pruned in a state: the lighter the color, the fewer the actions remain.
}
\textcolor{black}{Upon comparing Dollar-Euro domain's action pruning using M-Q-M and Q-M, we observe that Q-M results in pruning fewer actions (as shown in the Fig. \ref{heatmaps-qmv}). The additional information used by M-Q-M is able to anchor the values to better bounds than the unique fixed point identified by Q-M, which results in more pruning opportunities in M-Q-M. As the SBF increases, Q-M prunes out fewer actions, and so performance becomes similar to QL. This trend is consistent across other domains as well.
}
\begin{figure*}[!htb]
    \centering
    \includegraphics[scale=0.36]{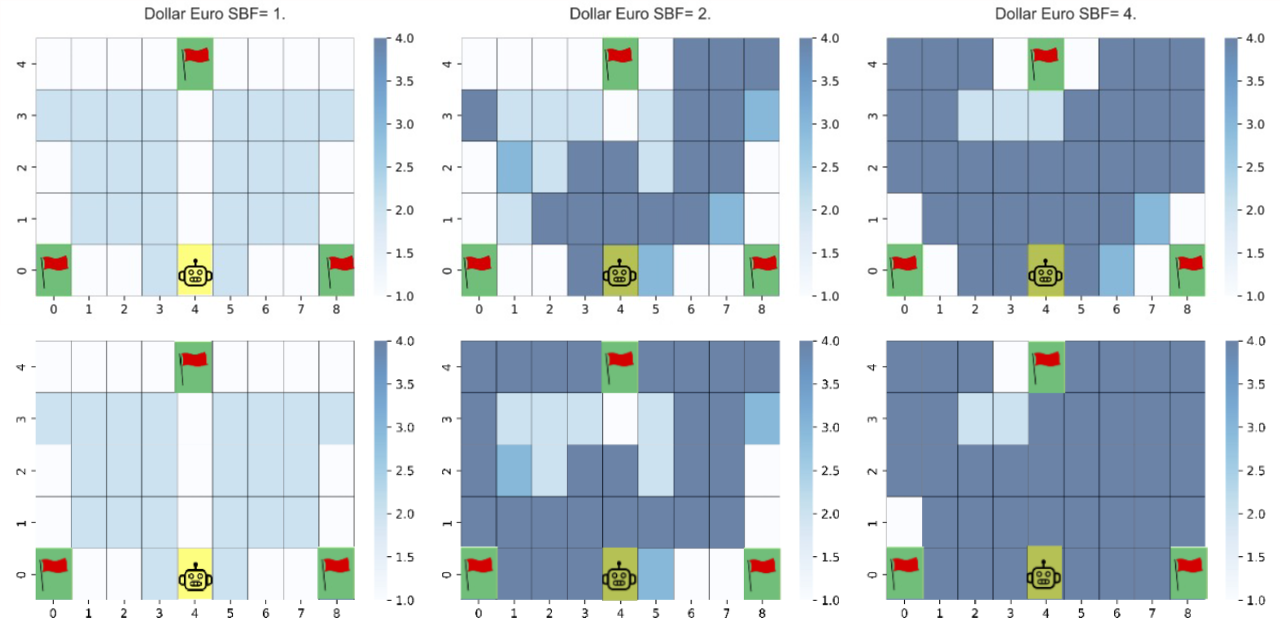}
    \vskip -8pt
    \caption{Heat-maps illustrating action pruning in the Dollar Euro domain using M-Q-M (top) and Q-M (bottom). Lighter shade of blue indicates fewer action remain after pruning.} \label{heatmaps-qmv} \label{heatmaps}
\end{figure*}

\subsection{Autogenerated MDP with Linear and Non Linear Combination Function}
\begin{figure*}[!htb]
    \centering
    \includegraphics[scale=0.32]{CI_exp2_5.jpeg}
    \vskip -8pt
    \caption{Convergence plots for auto-generated domains: $\mathcal{R}=R_1 +R_2$ (top) and $\mathcal{R}=(R_1+R_2)^{3}$ (bottom).}
    \label{linear}
\end{figure*}
In order to test generalization beyond gridworld and linear combination function,
we evaluated with auto-generated MDP where $T$ is randomly generated in each run.
The terminal states were held fixed as well as their terminal rewards. 
The convergence plots are presented in Fig. \ref{linear}, \textcolor{black}{averaged over 30 different MDPs}.
Similarly, we can observe that M-Q-M performs the best in 
both domains.
We also observe that when the combination function is a nonlinear combination, M-Q-M and Q-M's performance drops due to reduced action pruning.
It demonstrates that our methods can generalize to MDPs beyond gridworld dynamics and different target reward combinations.

\subsection{Noisy Combination Function}
We aim to evaluate how Q-M would perform under noisy combination functions and how noise affects its performance.
\textcolor{black}{
We used the same setting as the autogenerated MDP described above. 
We consider a situation where the combination function is not exactly known but can be modeled by using a noise component: $ \mathcal{R} = R_1 + R_2 + N$.
Assuming the knowledge of $N_{\min}$ and $N_{\max}$, we updated the initializations and Bellman updates for M-Q-M.
The convergence plots are presented in Fig. \ref{approx},
demonstrating the diminishing boost in performance as the number of actions pruned decreases with increase in the magnitude of noise.
As expected, we observe that noise has an impact on the efficacy of M-Q-M: the more noise, the smaller the performance gain. 
\textcolor{black}{It is important to note that the maximum magnitude of noise that allows action pruning depends on MDP reward design.}
However, it is promising to observe that M-Q-M can still be effective under such noisy situations since it can greatly expand the applicability of M-Q-M. 
For instance, when the functional relationship is unknown, we can apply regression to fit the source reward functions to the observed target rewards under an assumed functional form based on domain expertise; noise can be incorporated to handle regression error.
}
\begin{figure*}[!htb]
    \centering
    \includegraphics[scale=0.27]{combined_bar_reward_plot2.jpg}
    \vskip -8pt
\caption {Performance under varying noise in auto-generated domains. Left: Actions pruned (\%) vs. noise. Middle: M-Q-M convergence. Right: Q-M convergence. 
(color: dark = M-Q-M, lighter = Q-M).}
    \label{approx}
\end{figure*}

\section{Limitations and Future Work}\label{limitations}

\label{Discussion}

\textbf{Convergence Quality for M-Q-M}:
\textcolor{black}{The convergence behavior of M-Q-M depends critically on its initialization relative to the convergence point of Q-M. In the case where the initialization of M-Q-M strictly dominates the fixed point of Q-M (i.e., higher for upper bounds and lower for lower bounds,$\forall s,a$), 
M-Q-M would perform at least as well as Q-M. 
\textcolor{black}{However, the situation is less clear when the quality of the initialization is less known.}
This raises an open question regarding the quality of convergence for M-Q-M: Under what conditions on the initialization does M-Q-M yield strictly tighter bounds or improved performance compared to Q-M? Formal characterization of such conditions, potentially in terms of partial dominance or monotonicity properties over subsets of the state-action space, could offer deeper theoretical insights and guide more effective initialization strategies. Exploring this direction represents a compelling avenue for future work.
}

\textbf{Safety Considerations:} 
Safety is critical when applying Q-M in real-world settings. 
For instance, in autonomous driving, $Q_{i}^*$ may prioritize avoiding obstacles, ensuring safe navigation. In contrast, learning $Q_{i}^{\mu}$ could result in reckless behavior, like colliding with obstacles, which is unsafe and undesirable. Therefore, selecting a safe, ethical behavior for \textcolor{black}{learning} Q-variants 
becomes essential when designing 
\textcolor{black}{Q-M} systems for real-world applications.
\textcolor{black}{One possible solution is to leverage safe RL methods, such as shielding~\cite{alshiekh2018safe}, to ensure safety while learning Q-variants.}

\textbf{Scaling to Real-World Domains:}
While effective in discrete settings, Q-M must overcome challenges to scale to continuous spaces, where Q-value bounds are harder to initialize and refine. This raises a central challenge: how to effectively initialize and refine Q-value bounds when the space is uncountably infinite. One solution lies in leveraging function approximation to estimate these bounds efficiently. Additionally, identifying one-step reachable neighbors becomes more complex in continuous spaces. A potential approach is to approximate these neighbors via uniform sampling within a bounded radius in the state space, or to derive closed-form functions that produce finite, representative next states. 
Approximation errors may cause Q-M to prune optimal actions, degrading performance. To ensure reliability, pruning should be used cautiously and combined with techniques that mitigate approximation bias. 
\textcolor{black}{ For example, setting $\Delta$ (refer Sec. \ref{pruningT}) should be done empirically 
due to the fact that the error in the estimated value function is unknown. 
}

\textbf{Generalization and Domain Adaptation:}
Q-M currently assumes a known relationship between source and target rewards, limiting its use when target rewards are unknown.
Extending Q-M to learn or infer these relationships would improve adaptability. Its pruning strategy could also support dynamics transfer (off-dynamics RL), expanding applications to areas like sim2real transfer.

Q-M provides a promising foundation for transfer in RL. Realizing its practical impact will require addressing safety, scalability, approximation errors, and generalization, each offering challenges and opportunities for broader real-world adoption.

\color{black}

\section{Conclusions}
In this paper, we studied reward adaptation, the problem where the learning agent adapted to a target reward function based on the existing source behaviors under the same MDP except for $R$.
\textcolor{black}{We propose 2 methods 1) Q-Manipulation (Q-M) and its extension, 2) Monotonic Q-Manipulation (M-Q-M) as novel, theoretically grounded approaches for reward adaptation in reinforcement learning. By leveraging source Q-function variants, these methods compute tight bounds on the target Q-function to safely prune suboptimal actions.}
We formally proved that our approach converged and retained optimality under correct initializations.
Empirically, we showed that Q-M and M-Q-M were substantially more efficient than the baselines in domains where the source and target behaviors differ, and generalizable under different randomizations. 
We also applied Q-M to noisy combination functions 
to extend its applicability.  
As such, 
\textcolor{black}{our methods offer a robust framework for leveraging prior knowledge in reinforcement learning, advancing the state of transfer and continual learning.}
Our work also opens up many future opportunities, such as addressing continuous state and action spaces and handling different domain dynamics (in addition to reward functions) as in domain adaptation.

\newpage

\bibliography{main}
\bibliographystyle{tmlr}

\newpage
\appendix
\onecolumn
\section{Appendix}
\subsection{Theoretical Proofs} \label{proofs}

\textbf{Lemma \ref{lemma1}}
\begin{equation}
    \begin{aligned}
          Q^{\mu}_{R}(s,a) &= \min_{\pi} \left[ \mathbb{E} \left[ \sum_{t=0}^{\infty} \gamma^t r_t | s_0, \pi \right] \right] 
    \\
      = &- \max_{\pi} \left[ \mathbb{E} \left[ \sum_{t=0}^{\infty} -\gamma^t r_t | s_0, \pi \right] \right] 
      \\
      &= -Q^*_{-R}(s,a)
    \end{aligned}
\end{equation}

\textbf{Lemma \ref{l4}}
    When $\mathcal{R} = \sum c_iR_i$ where $c_i\geq 0$
    , an upper and lower bound of $Q^*_\mathcal{R}$ are given, respectively, by:
\begin{align}
    \begin{aligned}
        Q_0^{UB} & = \sum_{i=1}^{n} c_i Q^{*}_{i}
        \text{\;\;}
        \\
        Q_0^{LB}& = \max_i [c_i Q_i^{*} + \sum_j c_j Q_j^{\mu}] \text{\;\;where\;}j\in \{1:n\} \setminus i
    \end{aligned}
\end{align}

\begin{proof}

From definition, we have:  
\begin{equation}
        \begin{aligned}
            c_i Q_{i}^{\pi}
            &= \max_{\pi} \left[ \mathbb{E} \left[c_i r_{{i, 0}} + \gamma c_i r_{{i, 1}} + \ldots + \gamma^{n} c_i  r_{{i, n}}| s_0, \pi \right]\right]
        \end{aligned}
\end{equation}
By reorganizing the reward components, we have:
\begin{equation}
        \begin{aligned}
            \sum_i c_i Q_{i}^{\pi} = Q^{\pi}_{\sum_i c_i R_i}
        \end{aligned}
\end{equation}

Denote the optimal policy under the target reward function $\mathcal{R}$ as $\pi^*$, given $c_i \geq 0$, we can derive that

\begin{equation}
\sum_i c_i Q^{*}_{i} \geq \sum_i c_i Q_{i}^{\pi^*} = Q^{*}_{\mathcal{R}}
\end{equation}
For the lower bound, we have: 
 \begin{equation}
     \begin{split}
         & \max_i (c_i Q_{i}^{*} + \sum_{j \neq i} c_j Q_{j}^{\mu})
        \leq c_k Q_{k}^{*} + \sum_{j \neq k} c_j Q_{j}^{\pi_k^{*}} \\
        &\text{\;\, where $k$ denotes  the best choice of $i$ from the left}\\
         & \leq \max_\pi (c_i Q_{i}^{\pi} + \sum_{j \neq i} c_j Q_{j}^{\pi})
         \\
         &= Q_{\mathcal{R}}^{*} 
     \end{split}
 \end{equation}
\end{proof}

Next, we present a few lemmas that are used in the proof of our theorems: 
\begin{lemma}\label{max-max}
$$
\left|\max _a f(a)-\max _a g(a)\right| \leq \max _a|f(a)-g(a)| .
$$
\end{lemma}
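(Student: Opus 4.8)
The plan is to prove the inequality $\left|\max_a f(a) - \max_a g(a)\right| \leq \max_a |f(a) - g(a)|$ directly, by establishing a two-sided bound. Let me denote $a^* = \argmax_a f(a)$ and $b^* = \argmax_a g(a)$, so that $\max_a f(a) = f(a^*)$ and $\max_a g(a) = g(b^*)$. The strategy is to bound each of the two differences $f(a^*) - g(b^*)$ and $g(b^*) - f(a^*)$ separately, each from above by $\max_a |f(a) - g(a)|$, from which the absolute value bound follows immediately.

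First I would handle $f(a^*) - g(b^*)$. Since $b^*$ maximizes $g$, we have $g(b^*) \geq g(a^*)$, hence
\begin{equation*}
f(a^*) - g(b^*) \leq f(a^*) - g(a^*) \leq |f(a^*) - g(a^*)| \leq \max_a |f(a) - g(a)|.
\end{equation*}
By the symmetric argument, using that $a^*$ maximizes $f$ (so $f(a^*) \geq f(b^*)$), I obtain
\begin{equation*}
g(b^*) - f(a^*) \leq g(b^*) - f(b^*) \leq |f(b^*) - g(b^*)| \leq \max_a |f(a) - g(a)|.
\end{equation*}
Combining these two bounds gives $\left|f(a^*) - g(b^*)\right| \leq \max_a |f(a) - g(a)|$, which is exactly the claim.

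This lemma is essentially routine, so there is no real obstacle; the only point requiring care is to use the maximizer of each function in the correct direction (evaluating the \emph{other} function at the maximizer of the one being bounded below). A minor technical caveat is that the argument as written assumes the maxima are attained at some $a^*, b^*$; over a finite action set $A$ this is automatic, and since the paper works with discrete action spaces throughout, attainment is not an issue. If one wanted a fully general statement over arbitrary index sets, one would replace $f(a^*)$ by $\sup_a f(a)$ and run the same comparison with suprema and an $\epsilon$-argument, but for the finite-action setting relevant here the direct maximizer argument above suffices.
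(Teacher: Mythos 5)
Your proof is correct and uses the same core idea as the paper's: evaluate the competing function at the maximizer and use $g(b^*) \geq g(a^*)$ (resp.\ $f(a^*) \geq f(b^*)$) to pass to a pointwise difference bounded by $\max_a |f(a)-g(a)|$. The only cosmetic difference is that the paper disposes of one of your two symmetric cases with a without-loss-of-generality assumption, whereas you write both out explicitly.
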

\begin{proof}

Assume without loss of generality that $\max _a f(a) \geq \max _a g(a)$, and denote $a^*=\arg \max _a f(a)$. Then,
\[
\left|\max _a f(a)-\max _a g(a)\right|=\max _a f(a)-\max _a g(a)=f\left(a^*\right)-\max _a g(a) \leq f\left(a^*\right)-g\left(a^*\right) \leq \max _a|f(a)-g(a)| .
\]
This concludes the proof.
\end{proof}

\begin{lemma}\label{mim-min}
$$
\left|\min _a f(a)-\min _a g(a)\right| \leq \max _a|f(a)-g(a)| .
$$
\end{lemma}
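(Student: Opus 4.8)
The plan is to reduce this statement directly to Lemma~\ref{max-max}, which has already been established, rather than repeating the $\epsilon$-style argument from scratch. The key observation is the elementary identity $\min_a h(a) = -\max_a\bigl(-h(a)\bigr)$, valid for any real-valued $h$. Applying this to both $f$ and $g$ converts a difference of minima into a difference of maxima, where the nonexpansiveness bound is already known.

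Concretely, first I would rewrite
$$\left|\min_a f(a) - \min_a g(a)\right| = \left|\max_a\bigl(-g(a)\bigr) - \max_a\bigl(-f(a)\bigr)\right|,$$
where the inner sign flips come from the negation identity and the overall sign is absorbed by the absolute value. Then I would invoke Lemma~\ref{max-max} with the two functions taken to be $-g$ and $-f$, giving
$$\left|\max_a\bigl(-g(a)\bigr) - \max_a\bigl(-f(a)\bigr)\right| \leq \max_a\bigl|(-g(a)) - (-f(a))\bigr| = \max_a |f(a) - g(a)|.$$
Chaining these two lines yields the claim. Alternatively, one could mirror the direct proof of Lemma~\ref{max-max}: assume without loss of generality that $\min_a f(a) \geq \min_a g(a)$, set $a^* = \arg\min_a g(a)$ (note that here one anchors at the argmin of the function with the \emph{smaller} minimum, which is the single point where the min-case departs from the max-case), and bound $\min_a f(a) - \min_a g(a) \leq f(a^*) - g(a^*) \leq \max_a|f(a) - g(a)|$ using $\min_a f(a) \leq f(a^*)$.

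I do not anticipate any genuine obstacle, since this is a standard nonexpansiveness property of the pointwise minimum. The only point requiring a moment of care is the sign bookkeeping in the reduction (ensuring the absolute value correctly accounts for swapping the roles of $f$ and $g$ under negation), or equivalently, in the direct route, fixing the correct WLOG direction and anchoring point. I would favor the reduction, as it reuses Lemma~\ref{max-max} verbatim and keeps the appendix uniform.
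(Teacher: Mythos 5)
Your proposal is correct, and your preferred route is genuinely different from the paper's. The paper proves the lemma directly: assuming WLOG $f(a^*)=\min_a f(a) \geq \min_a g(a)=g(b^*)$, it anchors at $b^*$, the minimizer of the function with the \emph{smaller} minimum, and chains $\max_a |f(a)-g(a)| \geq |f(b^*)-g(b^*)| \geq f(b^*)-g(b^*) \geq f(a^*)-g(b^*) = \left|\min_a f(a)-\min_a g(a)\right|$, where the third inequality uses $f(a^*) \leq f(b^*)$. This is exactly the direct argument you sketch as your fallback (your $a^*$ is the paper's $b^*$), including the one subtle point you flag: anchoring at the argmin of the smaller-minimum function. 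Your favored route instead reduces the statement to Lemma~\ref{max-max} via the identity $\min_a h(a) = -\max_a\left(-h(a)\right)$, and your sign bookkeeping is sound: $\min_a f(a) - \min_a g(a) = \max_a\left(-g(a)\right) - \max_a\left(-f(a)\right)$, and Lemma~\ref{max-max} applied to the pair $(-g,-f)$ yields the bound $\max_a\left|f(a)-g(a)\right|$. What the reduction buys is economy and uniformity: no second WLOG argument, no care about which extremizer to evaluate at, and the min-lemma becomes a one-line corollary of the max-lemma. What the paper's direct proof buys is self-containedness and parallelism of presentation, with both lemmas proved by the same short evaluation-point argument so that neither depends on the other. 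Either version is fully adequate for the lemma's uses in the proofs of Theorems~\ref{qms}, \ref{th1}, and \ref{unique-fixed-point}.
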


\begin{proof}
Assume without loss of generality that $f(a^*)=\min_a f(a) \geq \min_a g(a)=g(b^*)$.
Then,
$$
\max_a |f(a)-g(a)| \geq\left|f\left(b^*\right)-g\left(b^*\right)\right| \geq f\left(b^*\right)-g\left(b^*\right) \geq f\left(a^*\right)-g\left(b^*\right) 
=
\left|\min _a f(a)-\min _a g(a)\right|
$$
This concludes the proof.
\end{proof}

\textbf{Theorem \ref{qms}} [Q-M Convergence]
$\mathcal{T}:\mathbb{R}^{|S\times A|} \rightarrow \mathbb{R}^{|S\times A|}$ is a strict contraction
such that the $Q$ function  converges to a unique fixed point for UB and LB, respectively, or more formally:
\[
\begin{aligned}
\|\mathcal{T'} Q_k - \mathcal{T'} Q_{k+1}\|_{\infty} 
&\leq \gamma \|Q_k - Q_{k+1}\|_{\infty}, 
\forall Q_k, Q_{k+1} &\in \mathbb{R}^{|S \times A|}
\end{aligned}
\]
\begin{proof}
For any two Q-functions, denoted $Q^{LB}_A$ and $Q^{LB}_B$, the distance between their transformations under the operator is strictly smaller than their original distance, scaled by $\gamma$. We demonstrate this for $\mathcal{T}_{min}$. The proof for $\mathcal{T}_{max}$ follows a symmetric argument.

We want to show:
\[
\| \mathcal{T}_{min} Q^{LB}_A - \mathcal{T}_{min} Q^{LB}_B \|_\infty \le \gamma \| Q^{LB}_A - Q^{LB}_B \|_\infty.
\]

Consider the absolute difference for an arbitrary state-action pair $(s, a)$:
\begin{align*}
    &\left| (\mathcal{T}_{min} Q^{LB}_A)(s, a) - (\mathcal{T}_{min} Q^{LB}_B)(s, a) \right| \\
    &= \left| \min_{s' \in \hat{T}(\cdot|s,a)} \left[ \mathcal{R}(s,a,s') + \gamma \max_{a'} Q^{LB}_A(s', a') \right]
           - \min_{s' \in \hat{T}(\cdot|s,a)} \left[ \mathcal{R}(s,a,s') + \gamma \max_{a'} Q^{LB}_B(s', a') \right] \right| \\
    &\le \max_{s' \in \hat{T}(\cdot|s,a)} \left| \left[ \mathcal{R}(s,a,s') + \gamma \max_{a'} Q^{LB}_A(s', a') \right]
           - \left[ \mathcal{R}(s,a,s') + \gamma \max_{a'} Q^{LB}_B(s', a') \right] \right| \text{\;\; (Lemma \ref{mim-min})} \\
    &= \max_{s' \in \hat{T}(\cdot|s,a)} \left| \gamma \max_{a'} Q^{LB}_A(s', a') - \gamma \max_{a'} Q^{LB}_B(s', a') \right| \\
    &= \gamma \max_{s' \in \hat{T}(\cdot|s,a)} \left| \max_{a'} Q^{LB}_A(s', a') - \max_{a'} Q^{LB}_B(s', a') \right| \\
    &\le \gamma \max_{s' \in \hat{T}(\cdot|s,a)} \max_{a'} \left| Q^{LB}_A(s', a') - Q^{LB}_B(s', a') \right| \text{\;\; (Lemma \ref{max-max})} \\
    &\le \gamma \| Q^{LB}_A - Q^{LB}_B \|_\infty
\end{align*}

This inequality holds for all $(s,a)$, so taking the maximum over all state-action pairs gives:
\[
\| \mathcal{T}_{min} Q^{LB}_A - \mathcal{T}_{min} Q^{LB}_B \|_\infty \le \gamma \| Q^{LB}_A - Q^{LB}_B \|_\infty.
\]

Since $\gamma \in [0, 1)$, the operator $\mathcal{T}_{min}$ is a contraction. By the Banach Fixed-Point Theorem, this implies that $\mathcal{T}_{min}$ has a unique fixed point in the space of bounded Q-functions.
\end{proof}


\textbf{Theorem \ref{ordering}}
Given the standard Bellman operator $T$, $\mathcal{T}_{min}$, and $\mathcal{T}_{max}$ (discussed above),
if the initial Q-functions are ordered such that $Q_0^{LB} \le Q_0 \le Q_0^{UB}$, then
$$
Q_k^{LB} \le Q_k \le Q_k^{UB}
$$
holds for all $k\geq0$. Thus, the fixed points $Q^*$, $Q^{LB}$, and $Q^{UB}$ produced by the respective operators,
satisfy the same ordering in the limit as $k \to \infty$.
\begin{proof}
First, we establish two key properties of the operators.
\begin{lemma}[Operator Properties]
\
\begin{enumerate}
    \item \textbf{Monotonicity:} For any two Q-functions $Q_A, Q_B$ such that $Q_A \le Q_B$ (pointwise), we have ${T}_{\Omega} Q_A \le T_{{\Omega}} Q_B$ for any of the three operators ${\Omega} \in \{\text{standard}, LB, UB\}$. This is because $\max_{a'} Q_A \le \max_{a'} Q_B$, and the $\min$, $\max$, and expectation operators all preserve this inequality. (for formal proof, refer to proposition 3.6 in \cite{kadurha2025bellman})
    \item \textbf{Ordering:} For any Q-function $Q$, we have $\mathcal{T}_{min} Q \le T Q \le \mathcal{T}_{max} Q$. This follows from the definition of expectation that the minimum of a set of values is less than or equal to their expectation, which is less than or equal to their maximum.

    
\end{enumerate}
\end{lemma}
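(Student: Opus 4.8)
The plan is to prove both parts by reducing all three operators to a single scalar quantity and then invoking elementary monotonicity. Fix a state--action pair $(s,a)$, and for any $Q$ define the auxiliary function $g_Q(s') = \mathcal{R}(s,a,s') + \gamma \max_{a'} Q(s',a')$. Each of the three operators is obtained by aggregating $g_Q(\cdot)$ over next states: the standard operator $T$ takes the expectation under $T(\cdot \mid s,a)$, the operator $\mathcal{T}_{min}$ takes the minimum over $\hat{T}(\cdot \mid s,a)$, and $\mathcal{T}_{max}$ takes the maximum over $\hat{T}(\cdot \mid s,a)$. Writing everything in terms of $g_Q$ is what lets me treat the three operators uniformly.

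For the monotonicity claim, I would first show that $Q_A \le Q_B$ pointwise implies $g_{Q_A}(s') \le g_{Q_B}(s')$ for every $s'$. This is two routine observations: the map $Q \mapsto \max_{a'} Q(\cdot, a')$ preserves a pointwise inequality (maximizing each side over the same finite action set), and the affine map $x \mapsto \mathcal{R}(s,a,s') + \gamma x$ is nondecreasing because $\gamma \ge 0$. Since each aggregation in use---$\min$, expectation, and $\max$---is itself monotone in its integrand, applying any of them to $g_{Q_A} \le g_{Q_B}$ preserves the inequality, yielding $(T_\Omega Q_A)(s,a) \le (T_\Omega Q_B)(s,a)$ for every $\Omega \in \{\text{standard}, LB, UB\}$. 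As $(s,a)$ was arbitrary, the pointwise operator inequality follows.

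For the ordering claim, I would fix $Q$ and $(s,a)$ and compare the three aggregations of the single function $g_Q(\cdot)$ via the elementary sandwich $\min_{s'} g_Q(s') \le \mathbb{E}_{s'}[g_Q(s')] \le \max_{s'} g_Q(s')$, since the mean of a function can be no smaller than its minimum and no larger than its maximum over the same set. The one point requiring care---and the main obstacle---is that the expectation defining $T$ is over the true transition distribution $T(\cdot \mid s,a)$, whereas the min/max range over the reachable-state set $\hat{T}(\cdot \mid s,a)$; the sandwich is valid only when the support of $T(\cdot \mid s,a)$ is contained in $\hat{T}(\cdot \mid s,a)$. I would close this gap by invoking the paper's lite-model assumption that $\hat{T}(\cdot \mid s,a)$ captures exactly the $1$-step reachable states (each experienced at least once), so the expectation is taken over a distribution supported on the set over which the min and max are evaluated. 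Under this assumption the sandwich holds at every $(s,a)$, and the pointwise ordering $\mathcal{T}_{min} Q \le T Q \le \mathcal{T}_{max} Q$ follows immediately.
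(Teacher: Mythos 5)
Your proof is correct and takes essentially the same route as the paper's own (very terse) argument: monotonicity of the map $Q \mapsto \mathcal{R}(s,a,s') + \gamma\max_{a'}Q(s',a')$ combined with the fact that $\min$, expectation, and $\max$ are all monotone aggregations, and then the elementary $\min \le \mathbb{E} \le \max$ sandwich for the ordering. The one thing you add beyond the paper is making explicit the hypothesis that the support of the true transition distribution $T(\cdot\mid s,a)$ is contained in the reachable set $\hat{T}(\cdot\mid s,a)$ --- a condition the paper leaves implicit in its definition of the lite-model as the exact $1$-step reachable states, and which is indeed necessary for the sandwich to be valid.
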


To prove $Q^{LB} \leq Q^* \leq Q^{UB}$
We proceed by mathematical induction:

\textbf{Base Case (k=0)}:
$$
Q_0^{LB} \leq Q_0 \leq Q_0^{UB} \text{\;\; (given)}
$$

\textbf{Inductive Hypothesis}:
Assume that for some $k \geq 0$, we have the inductive hypothesis:
$$
Q_k^{LB} \leq Q_k \leq Q_k^{UB}
$$

\textbf{Inductive Step}:
First, we prove the left-hand inequality, $Q_{k+1}^{LB} \le Q_{k+1}$:
\begin{align*}
    Q_{k+1}^{LB} &= \mathcal{T}_{min} Q_k^{LB} & (\text{by definition}) \\
    &\le \mathcal{T}_{min} Q_k & (\text{by monotonicity and induction hypothesis}) \\
    &\le {T} Q_k & (\text{by operator ordering}) \\
    &= Q_{k+1} & (\text{by definition})
\end{align*}

Next, we prove the right-hand inequality, $Q_{k+1} \le Q_{k+1}^{UB}$:
\begin{align*}
    Q_{k+1} &= {T} Q_k & (\text{by definition}) \\
    &\le \mathcal{T}_{max} Q_k & (\text{by Operator Ordering}) \\
    &\le \mathcal{T}_{max} Q_k^{UB} & (\text{by Monotonicity  and induction hypothesis}) \\
    &= Q_{k+1}^{UB} & (\text{by definition})
\end{align*}

Since both sides of inequalities hold, we have:
$$
Q_{k+1}^{LB} \leq Q_{k+1} \leq Q_{k+1}^{UB}
$$
Thus, by the principle of mathematical induction, the ordering
$$
Q_k^{LB} \leq Q_k \leq Q_k^{UB}
$$
holds for all $k\geq0$.
As $k \to \infty$, the inequality also holds for fixed points of individual operators.
\end{proof}
\color{black}

\textbf{Theorem \ref{th1}} [Convergence] 
The iteration process introduced by the Bellman operator in M-Q-M satisfies
$$
\|\mathcal{ T} Q_k-\mathcal{T} Q_{k+1}\|_{\infty} \leq 
\gamma
\|Q_k-Q_{k+1}\|_{\infty}, \forall Q_k, Q_{k+1} \in \mathbb{R}^{|S\times A|} .
$$
such that the $Q$ function converges to a fixed point.
\begin{proof}

\textbf{1) Upper Bound}

{The operator \(\mathcal{T}_{\min}\) for the upper bound is defined as follows:}
\begin{equation} \label{operator}
    Q^{UB}_{k+1}(s,a) = (\mathcal{T}_{\min} Q^{UB}_k)(s,a) = \min \left(Q^{UB}_k(s,a), \max_{s' \in {\hat{T}(\cdot|s,a)}} \left[\mathcal{R}(s,a,s') + \gamma \max_{a'} Q_k^{UB}(s',a')\right]\right)
\end{equation}
where $\hat{T}(\cdot|s,a)$ denotes reachable states from $s,a$. 

We consider the change of difference between $Q$ values between before and after the modified Bellman update (i.e., the difference between $\left|Q^{UB}_{k}(s,a) - Q^{UB}_{k+1}(s,a) \right|$ and $\left|Q^{UB}_{k+1}(s,a) - Q^{UB}_{k+2}(s,a) \right|$):

\textbf{Case 1: }
If the first elements were the smaller values for computing both $Q^{UB}_{k+1}$ and $Q^{UB}_{k+2}$ in Eq. \ref{operator}:
$$
Q^{UB}_{k+1}(s,a) = Q^{UB}_k(s,a)
$$
$$
Q^{UB}_{k+2}(s,a) =  Q^{UB}_{k+1}(s,a)
$$
$$\left| Q^{UB}_{k+1}(s,a) - Q^{UB}_{k+2}(s,a) \right|= |Q^{UB}_{k}(s,a)-Q^{UB}_{k+1}(s,a)|=0$$

\textbf{Case 2:}
If the second element in min was the smaller value for computing $Q^{UB}_{k+1}$ and the first element in min was the smaller value for $Q^{UB}_{k+2}$:
$$
Q^{UB}_{k+1}(s,a) = 
\max_{s' \in {\hat{T}(\cdot|s,a)}} \left[\mathcal{R}(s,a,s') + \gamma \max_{a'} Q_k^{UB}(s',a')\right]
$$
$$
Q^{UB}_{k+2}(s,a) =Q^{UB}_{k+1}(s,a)
$$
$$\left| Q^{UB}_{k+1}(s,a) - Q^{UB}_{k+2}(s,a) \right|=0 
$$

\textbf{Case 3:}
If the first element in min was the smaller value for computing $Q^{UB}_{k+1}$ and the second element in min was the smaller value for $Q^{UB}_{k+2}$:
\begin{equation} \label{case3}
    Q^{UB}_{k+1}(s,a) = Q^{UB}_k(s,a) \leq \max_{s' \in {\hat{T}(\cdot |s,a)}} \left[\mathcal{R}(s,a,s') + \gamma \max_{a'} Q_{k}^{UB}(s',a')\right] \text{(Eq. \ref{operator})}
\end{equation}
$$
Q^{UB}_{k+2}(s,a) =  \max_{s' \in {\hat{T}(\cdot|s,a)}} \left[\mathcal{R}(s,a,s') + \gamma \max_{a'} Q_{k+1}^{UB}(s',a')\right]
$$


\begin{equation*}
\begin{aligned}
&\left|Q^{UB}_{k+1}(s,a) - Q^{UB}_{k+2}(s,a) \right|
\\
&=Q^{UB}_{k}(s,a)-\max_{s' \in {\hat{T}(\cdot|s,a)}} \left[\mathcal{R}(s,a,s') + \gamma \max_{a'} Q_{k+1}^{UB}(s',a')\right] 
\\
&\leq \max_{s' \in {\hat{T}(\cdot|s,a)}} \left[\mathcal{R}(s,a,s') + \gamma \max_{a'} Q_{k}^{UB}(s',a')\right]-\max_{s' \in {\hat{T}(\cdot|s,a)}} \left[\mathcal{R}(s,a,s') + \gamma \max_{a'} Q_{k+1}^{UB}(s',a')\right] 
\left(\text{Eq. \ref{case3}}\right)
\\
&\leq \left|\max_{s' \in {\hat{T}(\cdot|s,a)}} \left[\mathcal{R}(s,a,s') + \gamma \max_{a'} Q_{k}^{UB}(s',a')\right]-\max_{s' \in {\hat{T}(\cdot|s,a)}} \left[\mathcal{R}(s,a,s') + \gamma \max_{a'} Q_{k+1}^{UB}(s',a')\right]\right|
\\
&\leq \gamma \max_{s' \in {\hat{T}(\cdot|s,a)}} \left| \max_{a'} Q^{UB}_{k}(s',a') - \max_{a'} Q^{UB}_{k+1}(s',a') \right|  \text{\; (Lemma \ref{max-max})}
\\
&\leq \gamma \max_{s' \in {\hat{T}(\cdot|s,a)}} \max_{a'} \left|  Q^{UB}_{k}(s',a') - Q^{UB}_{k+1}(s',a') \right|  \text{\; (Lemma \ref{max-max})}
\\
&\leq \gamma \|  Q^{UB}_{k}(s,a) - Q^{UB}_{k+1}(s,a) \|_{\infty} 
\end{aligned}
\end{equation*}

\textbf{Case 4: }
If the second elements in min were the smaller values for both $Q^{UB}_{k+1}$ and $Q^{UB}_{k+2}$:
$$
Q^{UB}_{k+1}(s,a) = \max_{s' \in {\hat{T}(\cdot|s,a)}} \left[\mathcal{R}(s,a,s') + \gamma \max_{a'} Q_{k}^{UB}(s',a')\right]
$$
$$
Q^{UB}_{k+2}(s,a) =  \max_{s' \in {\hat{T}(\cdot|s,a)}} \left[\mathcal{R}(s,a,s') + \gamma \max_{a'} Q_{k+1}^{UB}(s',a')\right]
$$

\begin{equation*}
\begin{aligned}
&\left| Q^{UB}_{k+1}(s,a) - Q^{UB}_{k+2}(s,a) \right| 
\\
&=\left|\max_{s' \in {\hat{T}(\cdot|s,a)}} \left[\mathcal{R}(s,a,s') + \gamma \max_{a'} Q_{k}^{UB}(s',a')\right]-\max_{s' \in {\hat{T}(\cdot|s,a)}} \left[\mathcal{R}(s,a,s') + \gamma \max_{a'} Q_{k+1}^{UB}(s',a')\right]\right|
\\
&\leq \gamma \|  Q^{UB}_{k}(s,a) - Q^{UB}_{k+1}(s,a) \|_{\infty}   \text{\; (similar to Case 3 above)}
\end{aligned}
\end{equation*}


Since the above cases hold for any $s, a$, 
we therefore have:

\begin{equation}
    \begin{aligned}
        \| Q^{UB}_{k+1} - Q^{UB}_{k+2}\|_{\infty} 
        \leq \gamma\|Q_{k}^{UB}-Q_{k+1}^{UB}\|_{\infty} 
    \end{aligned}
\end{equation}

Since the distance decreases by gamma with every iteration,
it will converge to $0$ and hence  $Q^{UB}$ converges to a fixed point.

\textbf{2) Lower Bound}

{The operator \(\mathcal{T}_{\max}\) for the lower bound is defined as follows:}
\begin{equation}\label{operator-lb}
    Q^{LB}_{k+1}(s,a) = (\mathcal{T}_{\max} Q^{LB}_k)(s,a) = \max \left(Q^{LB}_k(s,a), \min_{s' \in {\hat{T}(\cdot|s,a)}} \left[\mathcal{R}(s,a,s') + \gamma \max_{a'} Q_k^{LB}(s',a')\right]\right)
\end{equation}

$\hat{T}(\cdot|s,a)$ denotes reachable states from $s,a$. 

We consider the change of difference between Q values between before and after the modified Bellman update (i.e., the difference between $\left|Q^{LB}_{k}(s,a) - Q^{LB}_{k+1}(s,a) \right|$ and $\left|Q^{LB}_{k+1}(s,a) - Q^{LB}_{k+2}(s,a) \right|$):


\textbf{Case 1:}
If the first elements in max were the bigger values for both $Q^{LB}_{k+1}$ and $Q^{LB}_{k+2}$:
$$
Q^{LB}_{k+1}(s,a) = Q^{LB}_k(s,a)
$$
$$
Q^{LB}_{k+2}(s,a) =  Q^{LB}_{k+1}(s,a)
$$
$$\left| Q^{LB}_{k+1}(s,a) - Q^{LB}_{k+2}(s,a) \right|= |Q^{LB}_{k}(s,a)-Q^{LB}_{k+1}(s,a)|=0$$

\textbf{Case 2: }
If the second element in max was the bigger value for $Q^{LB}_{k+1}$ and the first element in max was the bigger value for $Q^{LB}_{k+2}$:
$$
Q^{LB}_{k+1}(s,a) = 
\min_{s' \in {\hat{T}(\cdot|s,a)}} \left[\mathcal{R}(s,a,s') + \gamma \max_{a'} Q_k^{LB}(s',a')\right]
$$
$$
Q^{LB}_{k+2}(s,a) =Q^{LB}_{k+1}(s,a)
$$
$$\left| Q^{LB}_{k+1}(s,a) - Q^{LB}_{k+2}(s,a) \right|=0 
$$

\textbf{Case 3: }
If the first element in max was the bigger value for $Q^{LB}_{k+1}$ and the second element in max was the bigger value for $Q^{LB}_{k+2}$:
\begin{equation}\label{case3-lb}
    Q^{LB}_{k+1}(s,a) = Q^{LB}_k(s,a) \geq \min_{s' \in {\hat{T}(\cdot|s,a)}} \left[\mathcal{R}(s,a,s') + \gamma \max_{a'} Q_{k}^{LB}(s',a')\right]
\end{equation}
$$
Q^{LB}_{k+2}(s,a) =  \min_{s' \in {\hat{T}(\cdot|s,a)}} \left[\mathcal{R}(s,a,s') + \gamma \max_{a'} Q_{k+1}^{LB}(s',a')\right]
$$


\begin{equation*}
\begin{aligned}
&\left| Q^{LB}_{k+1}(s,a) - Q^{LB}_{k+2}(s,a) \right|
\\
&= -\left(
Q^{LB}_{k}(s,a)-\min_{s' \in {\hat{T}(\cdot|s,a)}} \left[\mathcal{R}(s,a,s') + \gamma \max_{a'} Q_{k+1}^{LB}(s',a')\right]
\right)
\\
&
\text{\;\;\;\;\;\;\;\;\;\;\;\;\;\;\;\;\;\;\;\;\;\;\;\;\;\;\;\;\;\;\;\;\;\;\;\;\;\;\;\;\;\;\;\;\;\;\;\;\;\;\;\;\;\;\;\;\;\;\;\;\;\;\;\;\;\;\;\;\;\;\;}
\left(\text{since $Q^{LB}_{k+2}(s,a)\geq Q^{LB}_{k+1}(s,a) \text{ based on Eq. } \ref{operator-lb}$
}\right)
\\
&\leq -\left(\min_{s' \in {\hat{T}(\cdot|s,a)}} \left[\mathcal{R}(s,a,s') + \gamma \max_{a'} Q_{k}^{LB}(s',a')\right]-\min_{s' \in {\hat{T}(\cdot|s,a)}} \left[\mathcal{R}(s,a,s') + \gamma \max_{a'} Q_{k+1}^{LB}(s',a')\right]\right) 
\left(\text{Eq. } \ref{case3-lb}\right)
\\
&\leq \left|\min_{s' \in {\hat{T}(\cdot|s,a)}} \left[\mathcal{R}(s,a,s') + \gamma \max_{a'} Q_{k}^{LB}(s',a')\right]-\min_{s' \in {\hat{T}(\cdot|s,a)}} \left[\mathcal{R}(s,a,s') + \gamma \max_{a'} Q_{k+1}^{LB}(s',a')\right]\right|
\\
&\leq \gamma \max_{s' \in {\hat{T}(\cdot|s,a)}} \left| \max_{a'} Q^{LB}_{k}(s',a') - \max_{a'} Q^{LB}_{k+1}(s',a') \right|  \text{\; (Lemma \ref{mim-min})}
\\
&\leq \gamma \max_{s' \in {\hat{T}(\cdot|s,a)}} \max_{a'} \left|  Q^{LB}_{k}(s',a') - Q^{LB}_{k+1}(s',a') \right|  \text{\; (Lemma \ref{max-max})}
\\
&\leq \gamma \|  Q^{LB}_{k}(s,a) - Q^{LB}_{k+1}(s,a) \|_{\infty} 
\end{aligned}
\end{equation*}

\textbf{Case 4: }
If the second elements in max were the bigger values for both $Q_{k+1}$ and $Q_{k+2}$:
$$
Q^{LB}_{k+1}(s,a) = \min_{s' \in {\hat{T}(\cdot|s,a)}} \left[\mathcal{R}(s,a,s') + \gamma \max_{a'} Q_{k}^{LB}(s',a')\right]
$$
$$
Q^{LB}_{k+2}(s,a) =  \min_{s' \in {\hat{T}(\cdot|s,a)}} \left[\mathcal{R}(s,a,s') + \gamma \max_{a'} Q_{k+1}^{LB}(s',a')\right]
$$

\begin{equation*}
\begin{aligned}
&\left| Q^{LB}_{k+1}(s,a) - Q^{LB}_{k+2}(s,a) \right|
\\
&=\left|\min_{s' \in {\hat{T}(\cdot|s,a)}} \left[\mathcal{R}(s,a,s') + \gamma \max_{a'} Q_{k}^{LB}(s',a')\right]-\max_{s' \in {\hat{T}(\cdot|s,a)}} \left[\mathcal{R}(s,a,s') + \gamma \max_{a'} Q_{k+1}^{LB}(s',a')\right]\right|
\\
&\leq \gamma \|  Q^{LB}_{k}(s,a) - Q^{LB}_{k+1}(s,a) \|_{\infty}   \text{\; (similar to Case 3)}
\end{aligned}
\end{equation*}


Since the above cases hold for any $s, a$, 
we therefore have:

\begin{equation}
    \begin{aligned}
        \| Q^{LB}_{k+1} - Q^{LB}_{k+2}\|_{\infty} 
        \leq \gamma\|Q_{k}^{LB}-Q_{k+1}^{LB}\|_{\infty} 
    \end{aligned}
\end{equation}

Since the distance decreases by gamma with every iteration,
it will converge to $0$ and hence  $Q^{LB}$ converges to a fixed point. 
\end{proof}






\textcolor{black}{
\textbf{Theorem \ref{unique-fixed-point}}
The Bellman operator in M-Q-M specifies only a non-strict contraction in general:
\begin{equation*}
\left\|\mathcal{T} Q-\mathcal{T} \widehat{Q}\right\|_{\infty} 
     \leq \left\|Q-\widehat{Q}\right\|_{\infty} 
\end{equation*}
}
\begin{proof}
{1) For $\mathcal{T}_{\min}$ computing the upper bound:}
\begin{equation*}
    \begin{split}
        \left|\mathcal{T}_{\min} Q(s, a)-\mathcal{T}_{\min} \widehat{Q}(s, a)\right| =& 
        \bigg|\min \left(Q(s, a), \max_{s' \in {\hat{T}(\cdot|s,a)}}\left[\mathcal{R}(s,a,s')+ \gamma\max_{a'} (Q(s',a'))\right]\right)
        \\
        &-
        \min \left(\widehat{Q}(s, a), \max_{s' \in {\hat{T}(\cdot|s,a)}}\left[\mathcal{R}(s,a,s')+ \gamma\max_{a'} (\widehat{Q}(s',a'))\right]\right)\bigg|
        \\
        &\leq
        \\
        &  {\max}  \bigg( \left|
        Q(s, a) - \widehat{Q}(s,a )
        \right|,
        \\
        &
        \bigg|
        \max_{s' \in {\hat{T}(\cdot|s,a)}}\left[\mathcal{R}(s,a,s') + \gamma\max_{a'} (Q(s',a'))\right]
        \\
        & -
        \max_{s' \in {\hat{T}(\cdot|s,a)}}\left[\mathcal{R}(s,a,s') + \gamma\max_{a'} (\widehat{Q}(s',a'))\right]
        \bigg|
        \bigg) 
        \text{\;\;\;\;\;(Lemma \ref{mim-min})}
        \\
         & \leq
         \\
         &  {\max}  \bigg( \left|
        Q(s, a) - \widehat{Q}(s,a )
        \right|,
        \\
        & \gamma \bigg|
         \max_{s' \in {\hat{T} (\cdot|s,a)}} \max_{a'} \left[ Q(s',a') -\widehat{Q}(s',a') \right]
        \bigg|
        \bigg) 
        \text{\;\;\;\;\;(Lemma \ref{max-max})}
        \\
        &\leq
        {\max}  \bigg( \left\|
        Q - \widehat{Q}
        \right\|_{\infty},
        \gamma \bigg\|
        Q - \widehat{Q}
        \bigg\|_{\infty}
        \bigg) 
        \\
        & = \left\|
        Q - \widehat{Q}
        \right\|_{\infty}
    \end{split}
\end{equation*}

\newpage
{2) For $\mathcal{T}_{\max}$ computing the lower bound:}
\begin{equation*}
    \begin{split}
        \left|\mathcal{T}_{\max} Q(s, a)-\mathcal{T}_{\max} \widehat{Q}(s, a)\right| =& 
        \bigg|\max \left(Q(s, a), \min_{s' \in {\hat{T}(\cdot|s,a)}}\left[\mathcal{R}(s,a,s')+ \gamma\max_{a'} (Q(s',a'))\right]\right)
        \\
        &-
        \max \left(\widehat{Q}(s, a), \min_{s' \in {\hat{T}(\cdot|s,a)}}\left[\mathcal{R}(s,a,s')+ \gamma\max_{a'} (\widehat{Q}(s',a'))\right]\right)\bigg|
        \\
        &\leq
        \\
        &  {\max}  \bigg( \left|
        Q(s, a) - \widehat{Q}(s,a )
        \right|,
        \\
        &
        \bigg|
        \min_{s' \in {\hat{T}(\cdot|s,a)}}\left[\mathcal{R}(s,a,s') + \gamma\max_{a'} (Q(s',a'))\right]
        \\
        & -
        \min_{s' \in {\hat{T}(\cdot|s,a)}}\left[\mathcal{R}(s,a,s') + \gamma\max_{a'} (\widehat{Q}(s',a'))\right]
        \bigg|
        \bigg) 
        \text{\;\;\;\;\;(Lemma \ref{max-max})}
        \\
         & \leq
         \\
         &  {\max}  \bigg( \left|
        Q(s, a) - \widehat{Q}(s,a )
        \right|,
        \\
        & \gamma \bigg|
         \max_{s' \in {\hat{T} (\cdot|s,a)}} \max_{a'} \left[ Q(s',a') -\widehat{Q}(s',a') \right]
        \bigg|
        \bigg) 
        \text{\;\;\;\;\;(Lemma \ref{mim-min})}
        \\
        &\leq
        {\max}  \bigg( \left\|
        Q - \widehat{Q}
        \right\|_{\infty},
        \gamma \bigg\|
        Q - \widehat{Q}
        \bigg\|_{\infty}
        \bigg) 
        \\
        & = \left\|
        Q - \widehat{Q}
        \right\|_{\infty}
    \end{split}
\end{equation*}

Since the above holds for any $s, a$ and for both $\mathcal{T}_{\min}$ and $\mathcal{T}_{\max}$, we have the conclusion holds.
\end{proof}

\textbf{Theorem \ref{th2}} [Optimality]
For reward adaptation with Q variants, the optimal policies in the target domain remain invariant under Q-M or M-Q-M when the upper and lower bounds are initialized correctly. 
\begin{proof}
Let 
\begin{equation*}
    \begin{split}
     A_p(s) &=\{\widehat{a}| \text{\;} \exists a \text{\;}{Q}_{}^{LB}(s,a) > {Q}_{}^{UB}(s,\widehat{a}); a\neq\widehat{a}\}
    \\
    \Tilde{A}(s) &= A(s) \setminus A_p(s)
    \end{split}
\end{equation*}
where  $A_p(s)$ represents the set of pruned actions under set $s$ and $\Tilde{A}$ represents the remaining set of actions. To retain all optimal policies,  
it must be satisfied that none of the optimal actions under each state are pruned. 

Assuming that  a pruned action $\widehat{a}$ under $s$ is an optimal action, 
we must have 
\begin{equation*}
    \forall  a \text{\;} Q_{}^{*}(s,a)\leq Q_{}^{*}(s,\widehat{a})
\end{equation*}

Given that Q-M only prunes an action $\widehat{a}$ under $s$ when  $\exists a \text{\;} {Q}_{}^{LB}(s,a) > {Q}_{}^{UB}(s,\widehat{a})$, we can derive that
\begin{equation*}
Q_{}^{LB}(s,a) >
Q_{}^{UB}(s,\widehat{a}) \geq Q_{}^{*}(s,\widehat{a}) \geq Q_{}^{*}(s,a),
\end{equation*}
resulting in a contradiction that 
\begin{equation*}
Q_{}^{LB}(s,a) > Q_{}^{*}(s,a)
\end{equation*}
As a result, we know that all optimal actions and hence policies are retained. 
\end{proof}

\textbf{Corollary} \ref{nonunique} [Non-uniqueness]
The fixed point of the iteration process in M-Q-M may not be unique.
\begin{proof}
\textcolor{black}{This can be proved using the following example:}

\textcolor{black}{Consider a three state MDP with states s1, s2, s3, where from s1 agent can take an action that transitions uniformly (0.5) to s2 and s3, from s2 agent can take an action that transitions uniformly (0.5) to s1 and s3, and s3 is the terminal state. Reward is 1 for both actions. There is no reward for the terminal state. Assuming a discount factor of 0.5.}

\textcolor{black}{For the upper bound, depending on how V(s3) is initialized, it may result in different fixed points:}
\begin{itemize}
\item \textcolor{black}{When V(s3) is initialized to a big value (say 4), a fixed point may be V(s1) = 3 and V(s2) = 3;}
\item \textcolor{black}{When V(s3) is initialized to a small positive value (say 1), another fixed point could be V(s1) = 3/2 and V(s2) =3/2.}
\end{itemize}
\end{proof}
\subsection{Algorithm}
\begin{algorithm}[H]
\caption{Reward Adaptation via Q-Manipulation}
\begin{algorithmic}[1]
\STATE Retrieve variants of $Q$, reachable states, and source reward functions from source domains.
\STATE Initialize $Q^{UB}$ and $Q^{LB}$ for the target behavior.
\STATE Tighten the bounds using the iteration process in Q-M or M-Q-M.
\STATE Prune actions.
\STATE Perform learning in the target domain with the remaining actions.
\end{algorithmic}
\end{algorithm}
Github URL: \href{https://anonymous.4open.science/r/Reward-Adaptation-Via-QM-122C/Readme.txt}{https://anonymous.4open.science/r/Reward-Adaptation-Via-QM-122C/Readme.txt}


\subsection{Domain Information} \label{results}

\subsubsection{MDP Generation}
For autogenerated MDP creation:
the transitions and transition distributions are then randomly generated. 
Initially, the number of reachable states from any $s, a$ is $|A|=9$. However, when an SBF is set for the generated MDP: 
for each $s, a$ pair,
1) we first randomly select a number $k$ from [1, SBF] as the number of reachable states from $s, a$,
2) we retain the state from the transition with the highest probability (which is often the ``intended'' state) while randomly choosing $k-1$ states (without replacement) from its remaining reachable states; these are then considered as the new reachable states from $s, a$, and 3) re-normalize the transition distribution for $s, a$ based on these new reachable states. $3$ states are randomly chosen to be the terminal states. 
Note that a new MDP is generated for each run.
Similarly, for gridworld domains, in each run, the MDP is slightly different with respect to random SBF updated in a manner analogous to the auto-generated MDPs.
For our implementation, we resort to the most general form of reward function $R(s,a,s')$. 

Detailed descriptions of the domains used for our evaluations are given below:

\underline{\textbf{Dollar-Euro:}}
A $45$ states and $4$ actions grid-world domain as illustrated in Fig. \ref{env}.
    \textbf{Source Domain 1 with $R_1$ (collecting dollars):} The agent obtains a reward of $1.0$ for reaching the location labeled with ``\$", and $0.6$ for reaching the location labeled  with both \$ and \texteuro. 
    \textbf{Source Domain 2 with $R_2$ (collecting euros):}  The agent obtains a reward of $1.0$ for reaching the location labeled with \texteuro, and $0.6$ for reaching the location labeled  with both \$ and \texteuro. 
    \textbf{Target Domain with $\mathcal{R}$:}  $\mathcal{R} = R_{1}+R_{2}$.

\underline{\textbf{Frozen Lake:}}
A standard toy-text environment with $36$ states and $4$ actions. An episode terminates when the agent falls into any hole in the frozen lake ($4$ holes in total) or reaches the goal. 
    \textbf{Source Domain 1 with $R_1$:} The agent is rewarded $+1$ for reaching any hole in a subset of holes (denoted by $H$), $-1$ for reaching any hole in the remaining holes (denoted by $\widehat{H}$) and $0.5$ for reaching the goal.
    \textbf{Source Domain 2 with $R_2$:} The agent is rewarded $+1$ for reaching any hole in $\widehat{H}$, $-1$ for reaching any hole in ${H}$, and $0.5$ for reaching the goal.
    \textbf{Target Domain with $\mathcal{R}$:} Avoid all the holes and reach the goal, or $\mathcal{R} = R_{1}+R_{2}$.

\underline{\textbf{Race Track:}}
A $49$ states and $7$ actions grid-world domain. The $7$ actions correspond to different velocities for going forward, turning left, or turning right. An initial location, a goal location, and obstacles make up the race track. An episode ends when the agent reaches the goal position, crashes, or exhausts the total number of steps.
    \textbf{Source Domain 1 with $R_1$ (avoid obstacles):} The agent obtains a negative reward of $-0.5$ for collision with a living reward of $+0.2$.
    \textbf{Source Domain 2 with $R_2$ (terminate):}  The agent obtains a reward of $+2$ for reaching the goal, $-0.3$ living reward, and $-4$ for staying at the initial location.
    \textbf{Source Domain 3 with $R_3$ (stay put):} The agent obtains a reward of $+3$ for staying at the initial location.
    \textbf{Target Domain with $\mathcal{R}$:} Reach the goal in the least number of steps while avoiding all obstacles, or $\mathcal{R} = R_{1}+R_{2} + R_3$. This is the only domain where there are three source behaviors.

\underline{\textbf{Auto-generated Domains:}} 

Generate MDPs with the number of actions=$9$ and the number of states =$60$.
Rewards for the source domains (i.e., $(R_1, R_2)$) for two of those states are set to ($+1,-1$) and ($-1,+1$), respectively; rewards for the third terminal state are set to $(+0.6, +0.6)$.

\subsection{Learning with Q-variant in Practice}
\textcolor{black}{It is important to note that Q-variants may be difficult to learn with the same samples as experienced during a typical Q-learning process for $Q^*$. 
Some adaptation to Q learning must be made in order to learn $Q^*$ and $Q^{\mu}$ (or other Q-variant)
via the same set of samples. Note that theoretically, Q learning is guaranteed to converge regardless of the behavior policy, although that is inefficient and can result in inaccuracy in practice due to that the behavior policy may result in visiting a different distribution of the states from that of the optimal policy (distributional shift). To ensure that $Q^*$ and $Q^{\mu}$ (or other Q-variant) can both receive informative samples, one possible way is to alternate between training $Q^*$ and $Q^{\mu}$ (or other Q-variant) and use importance sampling while using samples from $Q^{\mu}$ (or other Q-variant) to training $Q^*$ (or vice versa), so that we can leverage samples from both $Q^*$ and $Q^{\mu}$ (or other Q-variant) to train both $Q^*$ and $Q^{\mu}$ (or other Q-variant). Fig. \ref{importance_sampling} shows that approximately the same number of samples are used in the training of individual behaviors for the autogenerated MDP. Moreover, for learning in practice, we rely on memorizing the reward functions during the learning of source behaviors. As a result, Q-M computes bounds using both the learned value functions and the memorized rewards, which has been shown to work well in practice (Fig. \ref{importance_sampling} and \ref{DEReal}).
For the racetrack domain, we observe that optimal actions are pruned, leading to suboptimal convergence as SBF increases. We believe this is due to the approximate estimate of Q in the source domain, which results in incorrect initialization and subsequently pruning of optimal actions due to inaccurate bounds. In such cases, Q-M serves as a better alternative since it does not rely on Q-value initialization.
}
\begin{figure}[!htb]
    \centering
    \includegraphics[scale=0.31]{Aggregated_Realistic_Comparison.jpeg}
    \vskip -8pt
    \caption{Convergence plot for autogenerated domain with linear reward combination: Behavior 1 (left), Behavior 2 (center) and Target (right) where M-Q-M\_p performs action pruning using a estimated  lite model, reward model, and Q-variants. M-Q-M\_p indicates M-Q-M in practice.}\label{importance_sampling}
\end{figure}
\begin{figure}[!htb]
    \centering
    \includegraphics[scale=0.31]{CI_exp1_5+real.jpeg}
    \vskip -8pt
    \caption{Convergence plot for: (top) Dollar Euro, (middle) Racetrack and (bottom) Frozen Lake
    using a estimated  lite model, reward model, and Q-variants. M-Q-M\_p indicates M-Q-M in practice.}\label{DEReal}
\end{figure}


\subsubsection{Hyperparameters} \label{hparams}
\textcolor{black}{
All hyperparamters are set to be same for the different methods in the same evaluation domain.
The exploration rate starts from $1.0$ and is gradually decayed. 
$\gamma$ is chosen between $[0.9, 0.99]$ across different domains.
}



\subsubsection{Running Time Comparison}
We measured the running times taken to run each evaluation for each method for a fixed number of training steps on an XPS 9500 laptop. The aim here is to show that Q-M iteration adds, in most cases, a reasonable amount of extra computation to the entire learning process. 

\begin{table}[!htb]
\footnotesize
\centering
\begin{tabular}{|l|c|c|c|}
\hline
\textcolor{black}{Domain}                               & \textcolor{black}{SBF\_min} & \textcolor{black}{SBF\_mid} & \textcolor{black}{SBF\_max} \\ \hline
\textcolor{black}{Dollar Euro}                           & \textcolor{black}{0.05}     & \textcolor{black}{0.04}     & \textcolor{black}{0.03}     \\ \hline
\textcolor{black}{Race Track}                            & \textcolor{black}{0.13}     & \textcolor{black}{0.12}     & \textcolor{black}{0.15}     \\ \hline
\textcolor{black}{Frozen Lake}                           & \textcolor{black}{0.04}     & \textcolor{black}{0.04}     & \textcolor{black}{0.04}     \\ \hline
\textcolor{black}{Autogenerated}                         & \textcolor{black}{0.11}     & \textcolor{black}{0.11}     & \textcolor{black}{0.12}     \\ \hline
\textcolor{black}{Non-linear Target Reward}               & \textcolor{black}{0.23}     & \textcolor{black}{1.2}     & \textcolor{black}{0.42}     \\ \hline
\textcolor{black}{Noisy Reward Combination}              & \textcolor{black}{0.12}     & \textcolor{black}{0.12}     & \textcolor{black}{0.13}   
\\ \hline
\end{tabular}
\caption{Running time for Q-M iteration process 
(in seconds) by Domain and SBF
}
\label{running1}
\end{table}



\newpage

\end{document}